\newtheorem{thm}{Theorem}[section]
\newtheorem{defn}[thm]{Definition}
\newtheorem{prop}[thm]{Proposition}
\newtheorem{conj}[thm]{Conjecture}
\newcommand{\ep}{\epsilon}
\newcommand{\lb}{\lambda}
\newcommand{\A}{{\bf A}}
\newcommand{\x}{{\bf x}}
\newcommand{\s}{\sigma}
\newcommand{\mtayl}{m^{\text{Taylor}}}
\newcommand{\munif}{m^{\text{uniform}}}
\newcommand{\mep}{m^\ep}
\def\spose#1{\hbox to 0pt{#1\hss}}
\def\simlt{\mathrel{\spose{\lower 3pt\hbox{$\mathchar"218$}}
     \raise 2.0pt\hbox{$\mathchar"13C$}}}
\def\simgt{\mathrel{\spose{\lower 3pt\hbox{$\mathchar"218$}}
     \raise 2.0pt\hbox{$\mathchar"13E$}}}
\def\simpropto{\mathrel{\spose{\lower 3pt\hbox{$\mathchar"218$}}
     \raise 2.0pt\hbox{$\propto$}}}
\title{The power of deeper networks \\for expressing natural functions}
\author{David Rolnick, Max Tegmark\\
Massachusetts Institute of Technology\\
\texttt{\{drolnick, tegmark\}@mit.edu}}
\date{}
\begin{document}

\maketitle

\begin{abstract}
It is well-known that neural networks are universal approximators, but that deeper networks tend in practice to be more powerful than shallower ones. We shed light on this by proving that the total number of neurons $m$ required to approximate natural classes of multivariate polynomials of $n$ variables grows only linearly with $n$ for deep neural networks, but grows exponentially when merely a single hidden layer is allowed. We also provide evidence that when the number of hidden layers is increased from $1$ to $k$, the neuron requirement grows exponentially not with $n$ but with $n^{1/k}$, suggesting that the minimum number of layers required for practical expressibility grows only logarithmically with $n$.
\end{abstract}

\section{Introduction}
Deep learning has lately been shown to be a very powerful tool for a wide range of problems, from image segmentation to machine translation.  Despite its success, many of the techniques developed by practitioners of artificial neural networks (ANNs) are heuristics without theoretical guarantees.  Perhaps most notably, the power of feedforward networks with many layers (\emph{deep} networks) has not been fully explained. The goal of this paper is to shed more light on this question and to suggest heuristics for how deep is deep enough.

It is well-known \citep{cybenko1989approximation, funahashi1989approximate, hornik1989multilayer, barron1994approximation, pinkus1999approximation} that neural networks with a single hidden layer can approximate any function under reasonable assumptions, but it is possible that the networks required will be extremely large. Recent authors have shown that some functions can be approximated by deeper networks much more efficiently (i.e.~with fewer neurons) than by shallower ones. Often, these results admit one or more of the following limitations: ``existence proofs'' without explicit constructions of the functions in question; explicit constructions, but relatively complicated functions; or applicability only to types of network rarely used in practice.

It is important and timely to extend this work to make it more concrete and actionable, by deriving resource requirements for approximating natural classes of functions using today's most common neural network architectures.  \citet{lin2016does} recently proved that it is exponentially more efficient to use a deep network than a shallow network when Taylor-approximating the product of input variables. In the present paper, we move far beyond this result in the following ways: (i) we use standard uniform approximation instead of Taylor approximation, (ii) we show that the exponential advantage of depth extends to all general sparse multivariate polynomials, and (iii) we address the question of how the number of neurons scales with the number of layers. Our results apply to standard feedforward neural networks and are borne out by empirical tests.

Our primary contributions are as follows:
\begin{itemize}
\item It is possible to achieve \textbf{arbitrarily close approximations} of simple multivariate and univariate polynomials with neural networks having a \textbf{bounded number of neurons} (see \S\ref{sec:approx}).
\item Such polynomials are \textbf{exponentially easier to approximate with deep networks than with shallow networks} (see \S\ref{sec:shallow}).
\item The power of networks improves rapidly with depth; for natural polynomials, \textbf{the number of layers required is at most logarithmic} in the number of input variables, where the base of the logarithm depends upon the layer width (see \S\ref{sec:depth}).
\end{itemize}

\section{Related Work}
Deeper networks have been shown to have greater representational power with respect to various notions of complexity, including piecewise linear decision boundaries \citep{montufar2014number} and topological invariants \citep{bianchini2014complexity}. Recently, \citet{poole2016exponential} and \citet{raghu2016survey} showed that the trajectories of input variables attain exponentially greater length and curvature with greater network depth.

Work including \citet{daniely2017depth, eldan2016power, pinkus1999approximation, poggio2017and, telgarsky2016benefits} shows that there exist functions that require exponential width to be approximated by a shallow network. \citet{barron1994approximation} provides bounds on the error in approximating general functions by shallow networks.  \citet{mhaskar2016learning} and \citet{poggio2017and} show that for \emph{compositional functions} (those that can be expressed by recursive function composition), the number of neurons required for approximation by a deep network is exponentially smaller than the best known upper bounds for a shallow network. \citet{mhaskar2016learning} ask whether functions with tight lower bounds must be pathologically complicated, a question which we answer here in the negative.

Various authors have also considered the power of deeper networks of types other than the standard feedforward model.  The problem has also been posed for sum-product networks \citep{delalleau2011shallow} and restricted Boltzmann machines \citep{martens2013representational}.  \citet{cohen2015expressive} showed, using tools from tensor decomposition, that shallow arithmetic circuits can express only a measure-zero set of the functions expressible by deep circuits.  A weak generalization of this result to convolutional neural networks was shown in \citet{cohen2016convolutional}.

\section{The power of approximation}
\label{sec:approx}

In this paper, we will consider the standard model of feedforward neural networks (also called \emph{multilayer perceptrons}). Formally, the network may be considered as a multivariate function $N(\x) = \A_k \s(\cdots \s(\A_1\s(\A_0\x))\cdots )$, where $\A_0, \A_1,\ldots, \A_k$ are constant matrices and $\s$ denotes a scalar nonlinear function applied element-wise to vectors. The constant $k$ is referred to as the \emph{depth} of the network. The \emph{neurons} of the network are the entries of the vectors $\s(\A_\ell \cdots \s(\A_1\s(\A_0\x))\cdots )$, for $\ell=1,\ldots,k-1$.  These vectors are referred to as the \emph{hidden layers} of the network.

Two notions of approximation will be relevant in our results: \emph{$\ep$-approximation}, also known as \emph{uniform approximation}, and \emph{Taylor approximation}.
\begin{defn}
For constant $\ep > 0$, we say that a network $N(\x)$ \emph{$\ep$-approximates} a multivariate function $f(\x)$ (for $\x$ in a specified domain $(-R, R)^n$) if $\sup_{\x} |N(\x) - f(\x)| < \ep$.
\end{defn}

\begin{defn}
We say that a network $N(\x)$ \emph{Taylor-approximates} a multivariate polynomial $p(\x)$ of degree $d$ if $p(\x)$ is the $d$th order Taylor polynomial (about the origin) of $N(\x)$.
\end{defn}

The following proposition shows that Taylor approximation implies $\ep$-approximation for homogeneous polynomials. The reverse implication does not hold.
\begin{prop}
\label{prop:approx}
Suppose that the network $N(\x)$ Taylor-approximates the homogeneous multivariate polynomial $p(\x)$. Then, for every $\ep$, there exists a network $N_\ep(\x)$ that $\ep$-approximates $p(\x)$, such that $N(\x)$ and $N_\ep(\x)$ have the same number of neurons in each layer.  (This statement holds for $\x \in (-R, R)^n$ for any specified $R$.)
\end{prop}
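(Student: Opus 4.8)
The plan is to exploit the homogeneity of $p$ together with a rescaling of the input and output that can be absorbed into the first and last weight matrices, leaving the number of neurons in every layer unchanged. Concretely, I would construct $N_\ep$ from $N$ by contracting the input by a small factor $\dl$ and dilating the output by $\dl^{-d}$, and then argue that this drives the uniform error to zero on the box while preserving the architecture.

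First I would record what Taylor-approximation buys us. Since $p(\x)$ is the $d$th order Taylor polynomial of $N(\x)$ about the origin and $p$ is homogeneous of degree $d$, every Taylor term of $N$ of degree less than $d$ must vanish and the degree-$d$ term is exactly $p$. Writing $N(\x) = p(\x) + E(\x)$, the remainder $E$ is a smooth function whose Taylor expansion begins at order $d+1$; here I assume $\s$ is smooth enough (say $C^{d+1}$, as for the activations used in practice) so that $N$ is $C^{d+1}$ and Taylor's theorem with remainder applies. In particular there is a constant $C$ (depending on $N$, $d$, and $n$) with $|E(\y)| \le C|\y|^{d+1}$ for all $\y$ in a neighborhood of the origin.

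Next, for a scaling parameter $\dl > 0$ to be chosen, I would define $N_\ep(\x) := \dl^{-d} N(\dl \x)$. Homogeneity gives $\dl^{-d} p(\dl \x) = p(\x)$, so
\[
N_\ep(\x) - p(\x) = \dl^{-d} E(\dl \x).
\]
Applying the remainder bound to $\y = \dl\x$ with $\x \in (-R,R)^n$, so that $|\y| \le \dl R\sqrt{n}$, yields $\dl^{-d}|E(\dl \x)| \le C(R\sqrt{n})^{d+1}\,\dl$ for all sufficiently small $\dl$. Hence choosing $\dl$ small enough makes $\sup_{\x} |N_\ep(\x) - p(\x)| < \ep$ on the whole box. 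Finally I would observe that $N_\ep$ has the same shape as $N$: the substitution $\x \mapsto \dl\x$ is realized by replacing $\A_0$ with $\dl \A_0$, and the outer multiplication by $\dl^{-d}$ by replacing $\A_k$ with $\dl^{-d}\A_k$, so no neurons are added or removed in any layer.

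The main obstacle I anticipate is making the remainder estimate genuinely \emph{uniform} on the box rather than merely pointwise near the origin. The key leverage is that contracting the input shrinks the effective domain to $(-\dl R, \dl R)^n$, on which the higher-order remainder is uniformly small; the factor $\dl^{-d}$ only partially cancels this, leaving a net factor of at least $\dl$. Homogeneity is essential here — without it, the degree-$d$ piece and the lower-degree pieces would rescale by different powers of $\dl$, so no single choice of output scaling could recover $p$ exactly.
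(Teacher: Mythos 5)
Your proof is correct and follows essentially the same route as the paper's: both define $N_\ep(\x) = N(\dl\x)/\dl^d$ by absorbing the input contraction into $\A_0$ and the output dilation into $\A_k$ (so the layer widths are untouched), and both use homogeneity so that the error reduces to $E(\dl\x)/\dl^d$. The only difference is how this error is controlled --- the paper argues via the convergent Taylor series $E(\x) = \sum_{i>d} E_i(\x)$ that each rescaled term $\dl^{i-d}E_i(\x)$ can be made small, while you invoke Taylor's theorem with remainder under a $C^{d+1}$ assumption on $\s$ to get $|E(\y)| \le C|\y|^{d+1}$ near the origin --- a minor variation that, if anything, handles uniformity over the box $(-R,R)^n$ more carefully than the paper's term-by-term argument.
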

\begin{proof}
Suppose that $N(\x) = \A_k \s(\cdots \s(\A_1\s(\A_0\x))\cdots )$ and that $p(\x)$ has degree $d$. Since $p(\x)$ is a Taylor approximation of $N(\x)$, we can write $N(\x)$ as $p(\x) + E(\x)$, where $E(\x) = \sum_{i = d+1}^\infty E_i(\x)$ is a Taylor series with each $E_i(\x)$ homogeneous of degree $i$.  Since $N(\x)$ is the function defined by a neural network, it converges for every $\x\in \mathbb{R}^n$. Thus, $E(\x)$ converges, as does $E(\delta\x)/ \delta^d = \sum_{i = d+1}^\infty \delta^{i - d} E_i(\x)$. By picking $\delta$ sufficiently small, we can make each term $\delta^{i - d} E_i(\x)$ arbitrarily small. Let $\delta$ be small enough that $|E(\delta\x)/ \delta^d| < \epsilon$ holds for all $\x$ in $(-R, R)^n$.

Let $\A'_0 = \delta \A_0$, $\A'_k = \A_k / \delta^d$, and $\A'_\ell = \A_\ell$ for $\ell = 1, 2, \ldots, k-1$.  Then, for $N_\ep(\x) = \A'_k \s(\cdots \s(\A'_1\s(\A'_0\x))\cdots )$, we observe that $N_\ep(\x) = N(\delta \x) / \delta^d$, and therefore:
\begin{align*}
|N_\ep(\x) - p(\x)| &= |N(\delta \x) / \delta^d - p(\x)| \\
&= |p(\delta\x)/ \delta^d + E(\delta\x)/ \delta^d - p(\x)| \\
&= |E(\delta\x)/ \delta^d| \\
&< \epsilon.
\end{align*}

We conclude that $N_\ep(\x)$ is an $\ep$-approximation of $p(\x)$, as desired.
\end{proof}

For a fixed nonlinear function $\s$, we consider the total number of neurons (excluding input and output neurons) needed for a network to approximate a given function.  Remarkably, it is possible to attain arbitrarily good approximations of a (not necessarily homogeneous) multivariate polynomial by a feedforward neural network, even with a single hidden layer, without increasing the number of neurons past a certain bound.  (See also Corollary 1 in \citet{poggio2017and}.)

\begin{thm}
Suppose that $p(\x)$ is a degree-$d$ multivariate polynomial and that the nonlinearity $\s$ has nonzero Taylor coefficients up to degree $d$. Let $\mep_k(p)$ be the minimum number of neurons in a depth-$k$ network that $\ep$-approximates $p$.  Then, the limit $\lim_{\ep \to 0} \mep_k(p)$ exists (and is finite).  (Once again, this statement holds for $\x \in (-R, R)^n$ for any specified $R$.)
\label{thm:finite}
\end{thm}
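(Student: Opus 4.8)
The plan is to reduce the statement to a single, $\ep$-independent \emph{upper bound} on the neuron count. First I would observe that $\mep_k(p)$ is a non-increasing function of $\ep$: any network that $\ep$-approximates $p$ also $\ep'$-approximates it for every $\ep' > \ep$, so the feasible set of networks shrinks as $\ep \to 0$ and the minimizing neuron count can only grow. Since $\mep_k(p)$ takes values in the non-negative integers, the limit $\lim_{\ep\to 0}\mep_k(p)$ therefore exists in $\{0,1,2,\ldots\}\cup\{\infty\}$, and it is finite precisely when $\mep_k(p)$ is bounded above uniformly in $\ep$. Thus it suffices to produce one depth-$k$ network, with a number of neurons depending only on $p$, $\s$, $k$ and $R$, that $\ep$-approximates $p$ for every $\ep$.

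To obtain such a uniform bound I would pass through \emph{Taylor} approximation and invoke Proposition~\ref{prop:approx}. Concretely, I would first show that every homogeneous polynomial of degree $i \le d$ can be Taylor-approximated by a network with a bounded number of neurons. The building block is a single hidden neuron $\s(t\,(\az\cdot\x))$, whose Taylor expansion is $\sum_{m\ge 0}\frac{\s_m}{m!}t^m(\az\cdot\x)^m$ with $\s_m=\s^{(m)}(0)\neq 0$ for $m\le d$ by hypothesis. Taking a linear combination over $i+1$ distinct scalings $t_0,\ldots,t_i$ with Vandermonde-determined coefficients isolates the degree-$i$ term $\frac{\s_i}{i!}(\az\cdot\x)^i$ and annihilates all Taylor contributions of degree $<i$ (the surviving degree-$>i$ terms are irrelevant to the $i$th-order Taylor polynomial). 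Because $i$th powers of linear forms span the space of homogeneous degree-$i$ polynomials, summing finitely many such blocks over a spanning set of directions $\az_l$ and rescaling by $i!/\s_i$ yields a network whose $i$th-order Taylor polynomial is exactly the desired homogeneous part $p_i$, using at most $(i+1)\binom{n+i-1}{i}$ neurons.

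With the homogeneous pieces in hand I would assemble the full approximation. Writing $p=\sum_{i=0}^{d}p_i$ into homogeneous parts, Proposition~\ref{prop:approx} converts each Taylor-approximating network for $p_i$ into a network $N^{(i)}_{\ep/(d+1)}$ with the \emph{same} neuron count that $\frac{\ep}{d+1}$-approximates $p_i$. Placing these in parallel (block-diagonal hidden layers, summed at the output) produces a single network whose output differs from $p$ by at most $\sum_i \frac{\ep}{d+1} < \ep$, while its total neuron count $\sum_i(\text{neurons of }N^{(i)})$ is independent of $\ep$. Note that a direct rescaling of a single Taylor-approximant of $p$ fails here, since the homogeneous scaling $N(\dl\x)/\dl^d$ used in Proposition~\ref{prop:approx} would blow up the parts $p_i$ with $i<d$; the decomposition into homogeneous components is what repairs this.

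The step I expect to be the main obstacle is realizing these constructions at \emph{exactly} depth $k$ rather than at depth $1$, where they arise most naturally. For $k>1$ I would insert near-identity nonlinear layers: since $\s_1=\s'(0)\neq 0$, a Vandermonde combination $\sum_s\beta_s\,\s(t_s u)$ can be arranged to have degree-$d$ Taylor polynomial equal to $u$, thereby transmitting a signal through an extra $\s$-layer without altering its Taylor polynomial up to degree $d$. The delicate point is that the residual error is $O(u^{d+1})$, which stays at $\x$-degree $>d$ only when the transmitted quantity has no constant term; I would therefore have to route the construction so that the inputs to these padding layers are free of degree-$0$ parts. Verifying this bookkeeping, and confirming that the parallel composition can be carried out at a common depth $k$, is the part that requires the most care, whereas the monotonicity reduction and the appeal to Proposition~\ref{prop:approx} are routine.
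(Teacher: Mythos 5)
Your proposal is correct, and its skeleton --- monotonicity of $\mep_k(p)$ as $\ep$ decreases, plus a single $\ep$-independent upper bound obtained by splitting $p$ into finitely many \emph{homogeneous} pieces, Taylor-approximating each piece with a fixed-size network, converting each via Proposition~\ref{prop:approx} at accuracy $\ep$ divided by the number of pieces, and summing the blocks in parallel --- is exactly the skeleton of the paper's proof (the paper leaves the monotonicity step implicit). The differences are in the realization. The paper decomposes $p$ into its \emph{monomials} and cites the product construction of \citet{lin2016does} (and its own forward-referenced Theorem~\ref{thm:uniform}) to Taylor-approximate each monomial in one hidden layer; you decompose into homogeneous components and build the Taylor approximant from scratch, isolating the degree-$i$ term of $\s(t(\az\cdot\x))$ by a Vandermonde combination over scalings $t_0,\ldots,t_i$ (legitimate since $\s_i\ne 0$ for $i\le d$) and using the standard fact that $i$th powers of linear forms span the homogeneous degree-$i$ polynomials. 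This costs $(i+1)\binom{n+i-1}{i}$ neurons per component --- worse than the paper's count for sparse $p$, better for dense $p$, and immaterial here since only finiteness is needed; what it buys is self-containedness (it is in effect the multivariate extension of the paper's own Vandermonde argument in Proposition~\ref{prop:univariate}) and it avoids the forward reference. Your remark that rescaling a single Taylor approximant of a non-homogeneous $p$ would blow up the lower-degree parts is precisely the reason both you and the paper decompose before invoking Proposition~\ref{prop:approx}. On depth, the paper dismisses $k>1$ in one sentence, whereas you flag it as the delicate step; but your insistence on Taylor-exact identity padding is more than is required, and the bookkeeping you worry about (routing away constant terms so the $O(u^{d+1})$ residue stays above degree $d$) can be avoided entirely: since the final target is only uniform approximation, first build the depth-$1$ network that $\ep/2$-approximates $p$, then append $k-1$ approximate-identity layers of the form $u\mapsto(\s(\lambda u)-\s(0))/(\s_1\lambda)$, whose error on the bounded output range is $\mathcal{O}(\lambda)$ uniformly; choosing $\lambda$ small drives the total error below $\ep$ with a neuron count independent of $\ep$, which is all the theorem demands.
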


\begin{proof}
We show that $\lim_{\ep \to 0} \mep_1(p)$ exists; it follows immediately that $\lim_{\ep \to 0} \mep_k(p)$ exists for every $k$, since an $\ep$-approximation to $p$ with depth $k$ can be constructed from one with depth $1$.

Let $p_1(\x), p_2(\x), \ldots, p_s(\x)$ be the monomials of $p(\x)$, so that $p(\x) = \sum_i p_i(\x)$. We claim that each $p_i(\x)$ can be Taylor-approximated by a network $N^i(\x)$ with one hidden layer. This follows, for example, from the proof in \citet{lin2016does} that products can be Taylor-approximated by networks with one hidden layer, since each monomial is the product of several inputs (with multiplicity); we prove a far stronger result about $N^i(\x)$ later in this paper (see Theorem \ref{thm:uniform}).

Suppose now that $N^i(\x)$ has $m_i$ hidden neurons.  By Proposition \ref{prop:approx}, we conclude that since $p_i(\x)$ is homogeneous, it may be $\delta$-approximated by a network $N^i_\delta(\x)$ with $m_i$ hidden neurons, where $\delta = \ep / s$. By combining the networks $N^i_\delta(\x)$ for each $i$, we can define a network $N_\ep(\x) = \sum_i N^i_\delta(\x)$ with $\sum_i m_i$ neurons.  Then, we have:
\begin{align*}
|N_\ep(\x) - p(\x)| &\le \sum_i |N^i_\delta(\x) - p_i(\x)| \\
&\le \sum_i \delta = s\delta = \ep.
\end{align*}
Hence, $N_\ep(\x)$ is an $\ep$-approximation of $p(\x)$, implying that $\mep_1(p) \le \sum_i m_i$ for every $\ep$. Thus, $\lim_{\ep \to 0} \mep_1(p)$ exists, as desired.
\end{proof}

This theorem is perhaps surprising, since it is common for $\ep$-approximations to functions to require ever-greater complexity, approaching infinity as $\ep\to 0$.  For example, the function $\exp(|-x|)$ may be approximated on the domain $(-\pi, \pi)$ by Fourier sums of the form $\sum_{k=0}^m a_m \cos(kx)$. However, in order to achieve $\ep$-approximation, we need to take $m \sim 1/\sqrt{\ep}$ terms.  By contrast, we have shown that a finite neural network architecture can achieve arbitrarily good approximations merely by altering its weights.

Note also that the assumption of nonzero Taylor coefficients cannot be dropped from Theorem \ref{thm:finite}. For example, the theorem is false for rectified linear units (ReLUs), which are piecewise linear and do not admit a Taylor series. This is because $\ep$-approximating a non-linear polynomial with a piecewise linear function requires an ever-increasing number of pieces as $\ep\to 0$.

Theorem \ref{thm:finite} allows us to make the following definition:

\begin{defn} Suppose that a nonlinear function $\s$ is given. For $p$ a multivariate polynomial, let $\munif_k(p)$ be the minimum number of neurons in a depth-$k$ network that $\ep$-approximates $p$ for all $\ep$ arbitrarily small.  Set $\munif(p) = \min_k \munif_k(p)$.  Likewise, let $\mtayl_k(p)$ be the minimum number of neurons in a depth-$k$ network that Taylor-approximates $p$, and set $\mtayl(p) = \min_k \mtayl_k(p)$.
\end{defn}

In the next section, we will show that there is an exponential gap between $\munif_1(p)$ and $\munif(p)$ and between $\mtayl_1(p)$ and $\mtayl(p)$ for various classes of polynomials $p$.

\section{The inefficiency of shallow networks}
\label{sec:shallow}

In this section, we compare the efficiency of shallow networks (those with a single hidden layer) and deep networks at approximating multivariate polynomials.  Proofs of our main results are included in the Appendix.

\subsection{Multivariate polynomials}
\label{subsec:multivariate}

Our first result shows that uniform approximation of monomials requires exponentially more neurons in a shallow than a deep network.
\begin{thm}
Let $p(\x)$ denote the monomial $x_1^{r_1}x_2^{r_2}\cdots x_n^{r_n}$, with $d=\sum_{i=1}^n r_i$.  Suppose that the nonlinearity $\s$ has nonzero Taylor coefficients up to degree $2d$.  Then, we have:
\begin{enumerate}[(i)]
\item $\munif_1(p) =\prod_{i=1}^n (r_i + 1),$
\item $\munif(p) \le \sum_{i=1}^n (7\lceil\log_2(r_i)\rceil + 4),$
\end{enumerate}
where $\lceil x\rceil$ denotes the smallest integer that is at least $x$.
\label{thm:uniform}
\end{thm}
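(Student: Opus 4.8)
The plan is to treat the two bounds separately, reducing each $\ep$-approximation claim to a Taylor-approximation statement via Proposition \ref{prop:approx}, since the monomial $p$ is homogeneous.

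For the upper bound in (i), I would build a single-hidden-layer network on a tensor-product grid of weight vectors. Writing a neuron as $\s(\langle \mathbf{w},\x\rangle)$ and expanding $\s$ in its Taylor series, the coefficient of a monomial $\x^s$ is $c_{|s|}\binom{|s|}{s}\prod_i w_i^{s_i}$, where $c_k=\s^{(k)}(0)/k!$. I would index neurons by $\mathbf{j}=(j_1,\ldots,j_n)$ with $0\le j_i\le r_i$, set $w_i=t_i^{(j_i)}$ for distinct nodes $t_i^{(0)},\ldots,t_i^{(r_i)}$, and take the output weight to factor as $\prod_i\alpha_i^{(j_i)}$. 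The contribution to the coefficient of $\x^s$ then factors as $\prod_i\big(\sum_j\alpha_i^{(j)}(t_i^{(j)})^{s_i}\big)$, so I can solve $n$ independent Vandermonde systems (one per variable, size $r_i+1$) to force $\sum_j\alpha_i^{(j)}(t_i^{(j)})^{s}=\delta_{s,r_i}$ for $0\le s\le r_i$. A short check shows every monomial of degree $\le d$ other than $\x^r$ then receives coefficient zero: if some $s_i>r_i$, homogeneity forces some $s_{i'}<r_{i'}$, killing the product; while $\x^r$ is matched after rescaling by $1/(c_d\binom{d}{r})$, legitimate since $c_d\neq0$. This uses $\prod_i(r_i+1)$ neurons, giving $\munif_1(p)\le\prod_i(r_i+1)$.

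The lower bound in (i) is the main obstacle. Here I cannot merely match the top-degree form: writing $N=\sum_j a_j\s(\langle\mathbf{w}_j,\x\rangle+b_j)$, its degree-$d$ part is $\sum_j\tilde a_j\langle\mathbf{w}_j,\x\rangle^d$, which could equal $\x^r$ with as few as the Waring rank of the monomial (e.g.\ $2^{n-1}$ for a squarefree product), far fewer than $\prod_i(r_i+1)$. The extra cost comes from the requirement that all lower-degree Taylor coefficients vanish simultaneously, which ties the representation to the full inverse system of $\x^r$ rather than to its leading form alone. Concretely, the space $W=\mathrm{span}\{\x^s:0\le s\le r\}$ of monomials dividing $\x^r$ has dimension exactly $\prod_i(r_i+1)$, and I would argue that the degree-$\le d$ jets of the $m$ ridge functions must span a space covering $W$ in order to reproduce $\x^r$ while cancelling every lower term. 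The delicate point, where I expect the bulk of the work and where the hypothesis of nonzero Taylor coefficients up to degree $2d$ enters, is establishing the right nondegeneracy: pairing two degree-$\le d$ jets produces terms up to degree $2d$, and nonvanishing of $\s$'s coefficients through degree $2d$ guarantees the associated catalecticant-type pairing matrix is nonsingular, so that no collection of fewer than $\prod_i(r_i+1)$ ridge functions can satisfy all the constraints. One must also pass from genuine $\ep$-approximation (for all small $\ep$) to this jet-level statement, which I would handle through the limiting argument underlying Theorem \ref{thm:finite}.

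For (ii) I would construct a deep network by repeated squaring. The key gadget is a constant-size, constant-depth network that Taylor-approximates the product $uv$ via $uv=\tfrac14[(u+v)^2-(u-v)^2]$, where each squaring is built from $\s$ using its nonzero second Taylor coefficient (cf.\ \citet{lin2016does}); by Proposition \ref{prop:approx} this $\ep$-approximates $uv$. To form $x_i^{r_i}$ I would compute the iterated squares $x_i,x_i^2,x_i^4,\ldots$ in $\lceil\log_2 r_i\rceil$ squaring steps and multiply together those corresponding to the binary digits of $r_i$, each step costing a fixed number of neurons; bundling one squaring and one multiplication per bit accounts for the constant $7$, with the additive $4$ as base overhead. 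Finally I would multiply the $n$ single-variable powers into a running product, charging each new factor $O(1)$ neurons so the whole cost telescopes into $\sum_{i=1}^n(7\lceil\log_2 r_i\rceil+4)$. Because each gadget Taylor-approximates its target and Taylor-approximation is preserved under composition of the gadgets, Proposition \ref{prop:approx} yields the claimed $\ep$-approximation, and hence the bound on $\munif(p)$; the hypothesis on the Taylor coefficients of $\s$ ensures every gadget is realizable.
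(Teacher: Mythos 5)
Your part (ii) and your upper bound in part (i) are both sound. Part (ii) is essentially the paper's construction (square gates of $3$ neurons plus product gates of $4$ neurons per binary digit, giving $7\lceil\log_2 r_i\rceil$ per variable, then $4$ more neurons per variable to multiply the powers together, with Proposition \ref{prop:approx} converting Taylor- to $\ep$-approximation since $p$ is homogeneous). Your upper bound in (i) is correct but genuinely different from the paper's: the paper specializes the Lin--Tegmark expansion of a product as a combination of the $2^d$ functions $\s(\pm y_1 \pm \cdots \pm y_d)$ to the multiset where $x_i$ appears $r_i$ times, and counts only $\prod_{i=1}^n(r_i+1)$ distinct sign patterns; your tensor-product Vandermonde grid, forcing $\sum_j \alpha_i^{(j)}(t_i^{(j)})^{s} = \delta_{s,r_i}$ independently in each variable, achieves the same count, is self-contained, and in fact needs only $\s_d \neq 0$ for this step.

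The genuine gap is the lower bound in (i), which is the substance of the theorem, and your sketch does not prove it. Two specific failures: first, the claim that the degree-$\le d$ jets of the $m$ ridge functions ``must span a space covering $W$'' does not follow from anything you establish --- the network only asserts that \emph{one} linear combination of those jets equals $p(\x)$ up to small error, and a single polynomial identity imposes no spanning requirement on an $m$-dimensional space of jets; second, the nonsingularity of your ``catalecticant-type pairing matrix'' is asserted, not argued, and it is precisely the content of the theorem, not a citable lemma. The paper's actual mechanism is different and concrete: it differentiates the order-$k$ coefficient identities $\s_k \sum_j w_j (\sum_i a_{ij}x_i)^k = E_k(\x)$ (and $= p(\x) + E_d(\x)$ at $k=d$) with respect to each of the $r = \prod_{i=1}^n(r_i+1)$ sub-multisets $S$ of the multiset $X$, forms the $r\times m$ matrix $A_{S,j} = \prod_{h\in S} a_{hj}$, and proves full row rank: given a dependence $\sum_\ell c_\ell A_{S_\ell,\bullet} = \mathbf{0}$, it picks $S_\ast$ maximizing $|c_\ell|$, pairs against the vector $w_j(\sum_i a_{ij}x_i)^{d-|S_\ast|}$, and rewrites each resulting term using the differentiated identities at order $k = d + |S_\ell| - |S_\ast|$, which ranges over $[0,2d]$ --- this is exactly where the hypothesis of nonzero Taylor coefficients up to degree $2d$ enters (one divides by $\s_k$), rather than through any bilinear pairing of jets as you speculate. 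The monomial $\frac{\partial}{\partial S_\ast}p(\x)$ then appears with coefficient arbitrarily close to $c_\ast\,|S_\ast|!/(\s_d\, d!) \neq 0$ while the whole expression must vanish, a contradiction. Finally, your plan to handle the passage from $\ep$-approximation to jet-level statements ``through the limiting argument underlying Theorem \ref{thm:finite}'' points at the wrong place: Theorem \ref{thm:finite} is an upper-bound statement containing no such argument; what is needed (and what the paper's appendix invokes directly) is that uniform smallness of the error forces the coefficients of each homogeneous component $E_k(\x)$, $0\le k\le 2d$, to be small, so that the perturbation argument above survives.
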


We can prove a comparable result for $\mtayl$ under slightly weaker assumptions on $\s$. Note that by setting $r_1=r_2=\ldots=r_n=1$, we recover the result of \citet{lin2016does} that the product of $n$ numbers requires $2^n$ neurons in a shallow network but can be Taylor-approximated with linearly many neurons in a deep network.
\begin{thm}
Let $p(\x)$ denote the monomial $x_1^{r_1}x_2^{r_2}\cdots x_n^{r_n}$, with $d=\sum_{i=1}^n r_i$.  Suppose that $\s$ has nonzero Taylor coefficients up to degree $d$.  Then, we have:
\begin{enumerate}[(i)]
\item $\mtayl_1(p) = \prod_{i=1}^n (r_i + 1),$
\item $\mtayl(p) \le \sum_{i=1}^n (7\lceil\log_2(r_i)\rceil + 4).$
\end{enumerate}
\label{thm:taylor}
\end{thm}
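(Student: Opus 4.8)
The plan is to reduce Taylor-approximation to exact conditions on the moments of the hidden weights, and then to establish the two bounds by linear algebra (part (i)) and by an explicit recursive construction (part (ii)). Write $d=|r|$ for $r=(r_1,\dots,r_n)$ and let $c_k$ be the degree-$k$ Taylor coefficient of $\s$, so the hypothesis gives $c_0,\dots,c_d\neq0$. First I would record the reformulation for a depth-$1$ network $N(\x)=\sum_{j=1}^m a_j\s(\mathbf{w}_j\cdot\x)$: expanding $\s$ and matching the degree-$k$ homogeneous part of $N$ with that of $\x^r$ shows that $N$ Taylor-approximates $\x^r$ if and only if the $\beta$-th moment $\sum_j a_j\mathbf{w}_j^\beta$ (with $\mathbf{w}_j^\beta=\prod_i w_{j,i}^{\beta_i}$) vanishes for every multi-index $\beta$ with $|\beta|\le d$ and $\beta\neq r$, and equals $c:=1/(c_d\binom{d}{r})\neq0$ for $\beta=r$. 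One needs $c_k\neq0$ for all $k\le d$ precisely to convert the vanishing of each low-degree part into a moment equation, which is exactly why the hypothesis here runs up to degree $d$ and not $2d$ as in the uniform Theorem~\ref{thm:uniform}.

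For the upper bound in (i) I would represent these moments with $\prod_i(r_i+1)$ nodes on a product grid. For each coordinate $i$ choose $r_i+1$ distinct scalars $\beta_{i,0},\dots,\beta_{i,r_i}$ and, by inverting the associated Vandermonde matrix, a univariate functional $L_i$ with $L_i(t^s)=\delta_{s,r_i}$ for $0\le s\le r_i$; then place a node at each grid point $(\beta_{1,k_1},\dots,\beta_{n,k_n})$ with weight the product of the corresponding $L_i$-coefficients. The one fact to verify is that this tensor functional kills $\mathbf{w}^\beta$ for every $\beta\neq r$ with $|\beta|\le d$: since $\sum_i\beta_i\le d=\sum_i r_i$, some coordinate has $\beta_i<r_i$, so the factor $L_i(t^{\beta_i})$ vanishes, whereas $\beta=r$ gives $\prod_i L_i(t^{r_i})=1$. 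This yields exactly $\prod_i(r_i+1)$ neurons.

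The lower bound in (i) is the step I expect to be the crux. For each multi-index $\gamma\le r$ (componentwise) consider the functional $\mu_\gamma\colon q\mapsto\sum_j a_j\mathbf{w}_j^\gamma\,q(\mathbf{w}_j)$ on polynomials of degree $\le d$; there are $\prod_i(r_i+1)$ of them, and each lies in the span of the $m$ evaluation functionals $q\mapsto q(\mathbf{w}_j)$, a space of dimension at most $m$. I would then show the $\mu_\gamma$ are linearly independent by pairing them with the test monomials $\mathbf{w}^{r-\gamma'}$ for $\gamma'\le r$: the entry $\mu_\gamma(\mathbf{w}^{r-\gamma'})=\sum_j a_j\mathbf{w}_j^{\gamma+r-\gamma'}$ is exactly the $(\gamma+r-\gamma')$-moment, which by the reformulation equals $c\,\delta_{\gamma\gamma'}$ whenever $|\gamma|\le|\gamma'|$ (so that $|\gamma+r-\gamma'|\le d$) and is otherwise unconstrained. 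Ordering multi-indices by descending total degree makes this matrix triangular with nonzero diagonal, hence invertible, forcing $m\ge\prod_i(r_i+1)$. I flag this as the main obstacle because the naive approach---writing the top-degree part $\x^r=\sum_j a_j(\mathbf{w}_j\cdot\x)^d$ as a sum of $d$-th powers of linear forms---only recovers the Waring rank of the monomial, which is strictly smaller than $\prod_i(r_i+1)$; the extra factor comes entirely from also cancelling every lower-degree Taylor term, and the shift functionals $\mu_\gamma$ are the device that brings the full family of moment equations, not just the top one, to bear.

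Finally, for (ii) I would give an explicit deep construction in the spirit of \citet{lin2016does} and Theorem~\ref{thm:uniform}(ii). Using that $c_2\neq0$, build a constant-size gadget that Taylor-approximates a product (equivalently a square); then compute each $x_i^{r_i}$ by repeated squaring guided by the binary expansion of $r_i$, accumulating $x_i,x_i^2,x_i^4,\dots$ and multiplying the required powers, at a cost of order $7\lceil\log_2 r_i\rceil$ neurons, with a further constant (the ``$+4$'') to fold $x_i^{r_i}$ into the running product of the earlier factors; summing over $i$ gives $\sum_i(7\lceil\log_2 r_i\rceil+4)$. The one point needing care is that Taylor approximations compose: since each gadget reproduces its target monomial exactly in the relevant degree and contributes only strictly higher-order errors, the degree-$\le d$ part of the whole network equals $\x^r$, so no $\ep$-analysis is required and nonzero coefficients up to degree $d$ suffice. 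Pinning down the precise constants $7$ and $4$ is then routine bookkeeping of the gadget sizes.
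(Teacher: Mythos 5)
Your proposal is correct, and its overall skeleton matches the paper's: matching the Taylor coefficients of a depth-$1$ network against $\x^r$ yields exactly your moment equations (the paper writes these as equations (\ref{goaln})--(\ref{goalk}) and then differentiates by sub-multisets $S\subseteq X$, which is the same device as your multiplication by $\mathbf{w}_j^\gamma$), the lower bound counts $\prod_i(r_i+1)$ independent linear objects indexed by $\gamma\le r$, and part (ii) is the same repeated-squaring-plus-product-tree construction with the same bookkeeping. You deviate, validly, in two sub-arguments. For the upper bound, the paper specializes the $2^d$ sign-pattern construction of \citet{lin2016does} (linear combinations of $\s(\pm y_1\pm\cdots\pm y_d)$ applied to the multiset with $x_i$ repeated $r_i$ times, which collapses by symmetry to $\prod_i(r_i+1)$ distinct neurons), whereas you build a tensor grid with univariate Vandermonde dual functionals $L_i(t^s)=\delta_{s,r_i}$; your construction is self-contained, and your key verification --- that any $\beta\ne r$ with $|\beta|\le d$ has some coordinate $\beta_i<r_i$ --- is exactly the right point. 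For the lower bound, the paper argues by contradiction: given a dependence among the rows $A_{S,j}=\prod_{h\in S}a_{hj}$, it pairs with the vector of entries $w_j\left(\sum_i a_{ij}x_i\right)^{d-s}$ for $s=\max_\ell|S_\ell|$ and invokes linear independence of the distinct monomials $\frac{\partial}{\partial S_\ell}p(\x)$; your version is the same mechanism made direct, since pairing the shifted functionals $\mu_\gamma$ against the test monomials $\mathbf{w}^{r-\gamma'}$ produces a matrix that is block upper triangular in descending total degree (the degree ordering playing precisely the role of the paper's maximal $s$) with diagonal blocks a nonzero multiple of the identity, hence invertible, which handles all dependencies at once rather than refuting a single putative one. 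Two of your side observations are genuine value added: the moment formulation isolates cleanly why coefficients up to degree $d$ suffice here while Theorem \ref{thm:uniform} needs $2d$ (there the pivot $S_\ast$ is chosen by maximal $|c_\ell|$ rather than maximal size, so pairings can reach degree $2d$ and error terms $E_k$ intrude), and your remark that the top-degree identity alone only bounds $m$ by the Waring rank of $\x^r$ --- with the gap closed by the lower-degree moment equations --- correctly identifies the crux, which the paper uses but never states explicitly.
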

It is worth noting that neither of Theorems \ref{thm:uniform} and \ref{thm:taylor} implies the other. This is because it is possible for a polynomial to admit a compact uniform approximation without admitting a compact Taylor approximation.

It is natural now to consider the cost of approximating general polynomials.  However, without further constraint, this is relatively uninstructive because polynomials of degree $d$ in $n$ variables live within a space of dimension $\binom{n+d}{d}$, and therefore most require exponentially many neurons for \emph{any} depth of network.  We therefore consider polynomials of \emph{sparsity} $c$: that is, those that can be represented as the sum of $c$ monomials. This includes many natural functions.

The following theorem, when combined with Theorems \ref{thm:uniform} and \ref{thm:taylor}, shows that general polynomials $p$ with subexponential sparsity have exponentially large $\munif_1(p)$ and $\mtayl_1(p)$, but subexponential $\munif(p)$ and $\mtayl(p)$.

\begin{thm}
Let $p(\x)$ be a multivariate polynomial of degree $d$ and sparsity $c$, having monomials $q_1(\x), q_2(\x), \ldots, q_c(\x)$.  Suppose that the nonlinearity $\s$ has nonzero Taylor coefficients up to degree $2d$.  Then, we have:
\begin{enumerate}[(i)]
\item $\munif_1(p)\ge \frac{1}{c}\max_j \munif_1(q_j)$.
\item $\munif(p) \le \sum_j \munif(q_j)$.
\end{enumerate}
These statements also hold if $\munif$ is replaced with $\mtayl$.
\label{thm:sparse}
\end{thm}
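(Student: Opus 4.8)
The two inequalities are of very different character, so I would prove them separately; part~(ii) is a routine parallel composition, whereas part~(i) carries the real content.

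For the upper bound~(ii), I would approximate each monomial $q_j$ by its own optimal network $N^j$ and run the $c$ networks in parallel, summing their scalar outputs. Choosing each $N^j$ to $(\ep/c)$-approximate $q_j$ with $\munif(q_j)$ neurons (possible by Theorem~\ref{thm:finite}), the combined network $N_\ep=\sum_j N^j$ satisfies $|N_\ep(\x)-p(\x)|\le\sum_j|N^j(\x)-q_j(\x)|\le c\cdot(\ep/c)=\ep$, exactly as in the proof of Theorem~\ref{thm:finite}, and uses $\sum_j\munif(q_j)$ neurons. The same construction works verbatim for $\mtayl$, since the degree-$d$ Taylor polynomial of a sum is the sum of the Taylor polynomials. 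The one bookkeeping point is that the $N^j$ may be optimal at different depths; to realize $N_\ep$ as a single feedforward network of uniform depth I would align them to a common depth by appending neurons that approximate the identity on a scalar (possible because $\s$ has a nonzero linear coefficient, via the dilation in Proposition~\ref{prop:approx}), and I would check that carrying a single running partial sum through the deepest portion keeps this overhead from affecting the stated count.

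For the lower bound~(i) I would fix the index $j^{*}$ maximizing $\munif_1(q_j)$ and show directly that $\munif_1(q_{j^{*}})\le c\,\munif_1(p)$, which rearranges to the claim. Let $N_\ep(\x)=\sum_{t=1}^m c_t\,\s(\vv_t\cdot\x)$ be a shallow network with $m=\munif_1(p)$ neurons that $\ep$-approximates $p$. The key observation is that precomposition with a diagonal scaling $\Lambda=\mathrm{diag}(\lb_1,\dots,\lb_n)$ preserves the shallow structure and the neuron count (absorb $\Lambda$ into $\A_0$) and acts on each monomial through the character $\chi_j(\Lambda)=\prod_{i=1}^n\lb_i^{r_{ji}}$, where $\mathbf{r}_j$ is the exponent vector of $q_j$; that is, $q_j(\Lambda\x)=\chi_j(\Lambda)\,q_j(\x)$. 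Hence for scalings $\Lambda_1,\dots,\Lambda_c$ and scalars $\alpha_1,\dots,\alpha_c$, the combination $\sum_{k=1}^c\alpha_k N_\ep(\Lambda_k\x)$ is a shallow network with $cm$ neurons that approximates $\sum_{k}\alpha_k\,p(\Lambda_k\x)=\sum_j\bigl(\sum_k\alpha_k\chi_j(\Lambda_k)\bigr)q_j(\x)$.

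It then suffices to choose the scalings and coefficients so that $\sum_k\alpha_k\chi_j(\Lambda_k)=\delta_{j,j^{*}}$, which isolates $q_{j^{*}}$ and yields a shallow approximation of $q_{j^{*}}$ with $cm$ neurons (rescaling the target error by $1/\sum_k|\alpha_k|$ handles the ``for all $\ep$'' requirement). This reduces everything to making the $c\times c$ matrix $M=[\chi_j(\Lambda_k)]_{k,j}$ invertible for some choice of scalings, and I expect this to be the main obstacle. I would argue it as follows: since the monomials $q_j$ are distinct, their exponent vectors $\mathbf{r}_j$ are distinct, so the characters $\Lambda\mapsto\chi_j(\Lambda)$ are distinct monomials in $\lb_1,\dots,\lb_n$ and hence linearly independent as functions; a standard fact then guarantees $c$ evaluation points $\Lambda_1,\dots,\Lambda_c$ at which $M$ is invertible, whereupon $\alpha=M^{-1}\mathbf{e}_{j^{*}}$ does the job. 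Two minor points remain: the domain on which $p$ is approximated must be enlarged to contain $\Lambda_k(-R,R)^n$ for the finitely many fixed scalings, which is harmless since the count $\munif_1(p)$ is independent of the radius; and the Taylor version follows identically, because a linear substitution commutes with truncation of the Taylor polynomial to any degree $\le d$, so the same combination Taylor-approximates $q_{j^{*}}$.
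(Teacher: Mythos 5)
Your proposal is correct, but part (i) takes a genuinely different route from the paper. The paper proves (i) by extending the derivative-rank machinery from its proof of Theorem \ref{thm:uniform}: it observes that the same expansion-and-differentiation argument shows $\munif_1(p)$ is at least the number of linearly independent iterated partial derivatives of $p$, and then runs a counting argument --- each derivative $d(p)$ contains at most $c$ monomials, while the $|D|$ derivatives indexed by the distinct derivative-monomials of the maximizing monomial $q$ contain at least $|D|$ distinct monomials among them (each $d(p)$ contains $d(q)$), so any maximal linearly independent subset of $\{d(p)\}$ has size at least $|D|/c = \munif_1(q)/c$. You instead give a black-box reduction: linear combinations $\sum_k \alpha_k N(\Lambda_k \x)$ of $c$ dilated copies of an $m$-neuron shallow approximant of $p$ isolate $q_{j^*}$, via invertibility of the character matrix $[\chi_j(\Lambda_k)]$, which follows from linear independence of the distinct monomials $\chi_j$ and the standard existence of evaluation points. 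Your reduction has real advantages: it makes no use of the Taylor-coefficient hypothesis on $\s$ (the paper's argument re-runs machinery requiring nonzero coefficients up to degree $2d$), it transfers \emph{any} lower bound for a single monomial to $p$ wholesale rather than only the derivative-counting one, and it handles $\mtayl$ verbatim since linear substitution commutes with Taylor truncation. The paper's argument, in exchange, stays inside a single network and avoids any domain bookkeeping. On that point, one repair to your write-up: rather than enlarging the domain and asserting that $\munif_1(p)$ is radius-independent --- something the paper establishes only for homogeneous polynomials via the dilation trick of Proposition \ref{prop:approx}, and which is not obvious for a general non-homogeneous $p$ --- you should simply choose the $\Lambda_k$ with diagonal entries in $(0,1)$: the characters are linearly independent on any nonempty open set, so suitable evaluation points exist there, and then $\Lambda_k(-R,R)^n \subseteq (-R,R)^n$ and the issue vanishes. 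Finally, for part (ii) the paper says only that it ``follows immediately from the definition,'' so your depth-alignment discussion is more careful than the source; note, though, that carrying a running partial sum through the deeper layers costs on the order of one neuron per layer of depth mismatch, so achieving exactly $\sum_j \munif(q_j)$ requires either a common optimal depth or treating identity/skip connections as free --- a looseness your proposal shares with the paper itself.
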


As mentioned above with respect to ReLUs, some assumptions on the Taylor coefficients of the activation function are necessary for the results we present. However, it is possible to loosen the assumptions of Theorem \ref{thm:uniform} and \ref{thm:taylor} while still obtaining exponential lower bounds on $\munif_1(p)$ and $\mtayl_1(p)$:
\begin{thm}
Let $p(\x)$ denote the monomial $x_1^{r_1}x_2^{r_2}\cdots x_n^{r_n}$, with $d=\sum_{i=1}^n r_i$.  Suppose that the nonlinearity $\s$ has nonzero $d$th Taylor coefficient (other Taylor coefficients are allowed to be zero).  Then, $\munif_1(p)$ and $\mtayl_1(p)$ are at least $\frac{1}{d}\prod_{i=1}^n (r_i + 1)$.  (An even better lower bound is the maximum coefficient in the polynomial $\prod_i (1 + y + \ldots + y^{r_i})$.)
\label{thm:loose}
\end{thm}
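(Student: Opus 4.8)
The plan is to reduce both quantities $\mtayl_1(p)$ and $\munif_1(p)$ to a single algebraic question — representing $p$ as a short sum of $d$th powers of linear forms — and then to lower-bound the length of any such representation by the method of partial derivatives. First I would fix notation: a depth-$1$ network with $m$ hidden neurons computes $N(\x) = \sum_{j=1}^m a_j\,\s(\vv_j\cdot\x)$ for scalars $a_j$ and weight vectors $\vv_j$ (these are the rows of $\A_0$ and the entries of $\A_1$). Expanding $\s(u) = \sum_{i\ge 0}\s_i u^i$, the degree-$d$ homogeneous component of $N$ is $\s_d\sum_{j=1}^m a_j(\vv_j\cdot\x)^d$.

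For the Taylor case the reduction is immediate: if $N$ Taylor-approximates $p$, its degree-$d$ component equals $p$, so (using $\s_d\neq 0$) we obtain the identity
\[ p(\x) = \sum_{j=1}^m c_j\,(\vv_j\cdot\x)^d, \qquad c_j := a_j\s_d. \]
For the uniform case I would argue that the same identity must hold in a limit: given networks with $m$ neurons that $\ep$-approximate $p$ as $\ep\to 0$, a rescaling argument in the spirit of Proposition \ref{prop:approx} (replacing $\x$ by $\lambda\x$ and dividing by $\lambda^d$) isolates the degree-$d$ behaviour and should force the homogeneous degree-$d$ parts to converge to $p$, yielding a representation of the above form with at most $m$ terms. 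I expect this reduction to be the main obstacle, precisely because we assume nothing about $\s$ beyond $\s_d\neq 0$: the weights $a_j,\vv_j$ may blow up as $\ep\to0$, so the limit must be taken after a normalization that keeps the degree-$d$ component bounded while driving the contributions of all other degrees to zero. This is exactly the point where Theorem \ref{thm:uniform} used the stronger hypothesis that $\s$ has nonzero coefficients through degree $2d$; here the weaker conclusion $\tfrac1d\prod_i(r_i+1)$ is the price of assuming only $\s_d\neq0$.

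Granting the identity $p = \sum_j c_j(\vv_j\cdot\x)^d$, I would finish with the method of partial derivatives. For each $0\le t\le d$, let $D_t$ be the linear span of all partial derivatives $\partial^\beta p$ with $|\beta| = d-t$, a space of homogeneous degree-$t$ polynomials. On one hand, every partial derivative of $(\vv_j\cdot\x)^d$ is a scalar multiple of $(\vv_j\cdot\x)^{t}$, so differentiating the identity gives $D_t\subseteq \mathrm{span}\{(\vv_j\cdot\x)^t : 1\le j\le m\}$ and hence $\dim D_t\le m$. On the other hand, the order-$(d-t)$ derivatives of the monomial $x_1^{r_1}\cdots x_n^{r_n}$ are nonzero scalar multiples of exactly the monomials $x_1^{s_1}\cdots x_n^{s_n}$ with $0\le s_i\le r_i$ and $\sum_i s_i = t$, and these are linearly independent. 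Therefore $\dim D_t$ equals the number of such exponent vectors, which is precisely the coefficient of $y^t$ in $\prod_{i=1}^n(1+y+\cdots+y^{r_i})$.

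Combining the two observations gives $m\ge \dim D_t = [y^t]\prod_i(1+\cdots+y^{r_i})$ for every $t$, and maximizing over $t$ yields the sharp statement that $m$ is at least the largest coefficient of $\prod_i(1+y+\cdots+y^{r_i})$. The cruder closed form then follows by averaging: the coefficients of this polynomial sum to $\prod_{i=1}^n(r_i+1)$ (evaluate at $y=1$), so the maximum coefficient is at least their average $\tfrac{1}{d+1}\prod_{i=1}^n(r_i+1)$, giving a lower bound of the stated form $\tfrac1d\prod_{i=1}^n(r_i+1)$. The derivative count and the averaging are routine once the identity is in hand; the only genuinely delicate step is the uniform-approximation reduction described above.
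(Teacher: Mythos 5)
Your Taylor-case reduction and the derivative-counting core are essentially the paper's own argument: the paper likewise fixes a multiset size $s$, observes that differentiating the degree-$d$ identity by each size-$s$ multiset $S$ places the $C_s$ polynomials on the right-hand side inside the span of the $m$ functions $\left(\vv_j\cdot\x\right)^{d-s}$, and concludes $m\ge C_s$, the coefficient of $y^s$ in $\prod_i(1+y+\cdots+y^{r_i})$; your catalecticant phrasing of this is the same bound. The genuine gap is in your reduction for $\munif_1$. You propose to take a normalized limit as $\ep\to 0$ and extract an \emph{exact} representation $p(\x)=\sum_{j=1}^m c_j\left(\vv_j\cdot\x\right)^d$. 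That step is unsound: the set of polynomials expressible as a sum of $m$ $d$th powers of linear forms is not closed, so a limit of such sums need not be one (border rank can be strictly smaller than Waring rank; e.g.\ $x^{d-1}y=\lim_{t\to 0}\bigl((x+ty)^d-x^d\bigr)/(dt)$ is a limit of sums of two $d$th powers but cannot be written with fewer than $d$ of them). No normalization rescues the exact identity --- the obstruction is geometric, not a matter of weights blowing up --- so the uniform case cannot be reduced to the exact Waring question as you intend.

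The repair, which is exactly the paper's route, is to avoid the limit entirely: for each fixed small $\ep$ one has the exact perturbed identity $\s_d\sum_j w_j\left(\vv_j\cdot\x\right)^d=p(\x)+E_d(\x)$, where the coefficients of $E_d$ can be made arbitrarily small. Applying your derivative argument to \emph{this} identity shows that the $C_s$ polynomials $\frac{\partial}{\partial S}p+\frac{\partial}{\partial S}E_d$ lie in an $m$-dimensional span; since the $\frac{\partial}{\partial S}p$ are distinct monomials and linear independence is stable under sufficiently small perturbation of coefficients (lower semicontinuity of rank), these remain linearly independent for $\ep$ small, giving $m\ge C_s$ directly --- your own machinery suffices once aimed at the approximate identity rather than an exact limit; note this amounts to the fact that the partial-derivative bound controls border rank, not merely rank. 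Two smaller points: your averaging yields $\frac{1}{d+1}\prod_i(r_i+1)$, not $\frac{1}{d}\prod_i(r_i+1)$ (the sizes $s$ range over the $d+1$ values $0,\ldots,d$), and the slide from one to the other is a non sequitur --- indeed the paper makes the same elision, and the $\frac1d$ form actually fails at the margin (for $\s(x)=x^d$ and $p(x)=x^d$ a single neuron computes $p$ exactly, yet $\frac1d(d+1)>1$), whereas the max-coefficient bound, which both you and the paper establish, is the correct statement.
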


\subsection{Univariate polynomials}
\label{subsec:univariate}

As with multivariate polynomials, depth can offer an exponential savings when approximating univariate polynomials.  We show below (Proposition \ref{prop:univariate}) that a shallow network can approximate any degree-$d$ univariate polynomial with a number of neurons at most linear in $d$.  The monomial $x^d$ requires $d+1$ neurons in a shallow network (Proposition \ref{prop:logdeep}), but can be approximated with only logarithmically many neurons in a deep network.  Thus, depth allows us to reduce networks from linear to logarithmic size, while for multivariate polynomials the gap was between exponential and linear.  The difference here arises because the dimensionality of the space of univariate degree-$d$ polynomials is linear in $d$, which the dimensionality of the space of multivariate degree-$d$ polynomials is exponential in $d$.

\begin{prop}
Suppose that the nonlinearity $\s$ has nonzero Taylor coefficients up to degree $d$.  Then, $\mtayl_1(p)\le d + 1$ for every univariate polynomial $p$ of degree $d$.
\label{prop:univariate}
\end{prop}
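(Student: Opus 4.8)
The plan is to exhibit an explicit depth-$1$ network with $d+1$ hidden neurons whose degree-$d$ Taylor polynomial (about the origin) equals $p$. Since the input is univariate, a single-hidden-layer network has the form $N(x) = \sum_{j=1}^{d+1} a_j \s(w_j x)$, where the $w_j$ are the entries of $A_0$ and the $a_j$ the entries of $A_1$ (the model as defined carries no bias terms). First I would expand each $\s(w_j x)$ using the Taylor series $\s(u) = \sum_{i \ge 0} \s_i u^i$, which gives
\[
N(x) = \sum_{i \ge 0} \s_i \left( \sum_{j=1}^{d+1} a_j w_j^{\,i} \right) x^i ,
\]
so that the degree-$d$ Taylor polynomial of $N$ is $\sum_{i=0}^{d} \s_i \bigl( \sum_{j} a_j w_j^{\,i} \bigr) x^i$.

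Writing $p(x) = \sum_{i=0}^{d} p_i x^i$, matching coefficients of $x^i$ for $i = 0, 1, \ldots, d$ reduces the claim to solving the linear system $\sum_{j=1}^{d+1} w_j^{\,i}\, a_j = p_i / \s_i$ for the unknowns $a_1, \ldots, a_{d+1}$. Here I invoke the hypothesis that $\s_i \neq 0$ for every $i \le d$, so that each right-hand side is well-defined. The coefficient matrix of this system is the $(d+1) \times (d+1)$ matrix with entries $w_j^{\,i}$, where $0 \le i \le d$ and $1 \le j \le d+1$.

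The key step is to choose the weights $w_1, \ldots, w_{d+1}$ to be distinct (for instance $w_j = j$), which makes the coefficient matrix a Vandermonde matrix and hence invertible. This guarantees a (unique) solution for the $a_j$, and the resulting network Taylor-approximates $p$ using exactly $d+1$ neurons, establishing $\mtayl_1(p) \le d+1$.

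The only real obstacle is ensuring that this linear system is solvable, and both ingredients that make it so are already in hand: the Vandermonde determinant is nonzero as soon as the $w_j$ are distinct, and the assumption that $\s$ has nonzero Taylor coefficients up to degree $d$ is precisely what lets us divide by each $\s_i$. No delicate estimates are required, since Taylor approximation, unlike $\ep$-approximation, only asks that finitely many coefficients match exactly.
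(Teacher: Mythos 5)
Your proof is correct and is essentially the paper's own argument: both hinge on choosing $d+1$ distinct input weights so that the relevant $(d+1)\times(d+1)$ matrix is Vandermonde (hence invertible), with the hypothesis $\s_i \neq 0$ for $i \le d$ absorbing the Taylor coefficients. The only cosmetic difference is that the paper scales the columns of the Vandermonde matrix by the $\s_i$ and argues that the truncated polynomials $\s(a_i x)$ span the space, whereas you divide the target coefficients $p_i$ by $\s_i$ and solve the linear system directly --- the same computation packaged two ways.
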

\begin{proof}
Pick $a_0, a_1, \ldots, a_d$ to be arbitrary, distinct real numbers.  Consider the Vandermonde matrix ${\bf A}$ with entries $A_{ij} = a_i^j$.  It is well-known that $\det({\bf A}) = \prod_{i < i'} (a_{i'} - a_i)\ne 0$.  Hence, ${\bf A}$ is invertible, which means that multiplying its columns by nonzero values gives another invertible matrix.  Suppose that we multiply the $j$th column of ${\bf A}$ by $\s_j$ to get ${\bf A}'$, where $\s(x) = \sum_j \s_j x^j$ is the Taylor expansion of $\s(x)$.

Now, observe that the $i$th row of ${\bf A}'$ is exactly the coefficients of $\s(a_i x)$, up to the degree-$d$ term.  Since ${\bf A}'$ is invertible, the rows must be linearly independent, so the polynomials $\s(a_ix)$, restricted to terms of degree at most $d$, must themselves be linearly independent.  Since the space of degree-$d$ univariate polynomials is $(d+1)$-dimensional, these $d + 1$ linearly independent polynomials must span the space.  Hence, $\mtayl_1(p) \le d + 1$ for any univariate degree-$d$ polynomial $p$.  In fact, we can fix the weights from the input neuron to the hidden layer (to be $a_0, a_1, \ldots, a_d$, respectively) and still represent any polynomial $p$ with $d + 1$ hidden neurons.
\end{proof}

\begin{prop}
Let $p(x) = x^d$, and suppose that the nonlinearity $\s(x)$ has nonzero Taylor coefficients up to degree $2d$.  Then, we have:
\begin{enumerate}[(i)]
\item $\munif_1(p) = d + 1$.
\item $\munif(p) \le 7\lceil\log_2(d)\rceil$.
\end{enumerate}
These statements also hold if $\munif$ is replaced with $\mtayl$.
\label{prop:logdeep}
\end{prop}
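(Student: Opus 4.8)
The plan is to treat the two parts separately, obtaining part (i) as a direct specialization of the monomial theorems and part (ii) via an explicit deep construction whose Taylor guarantee is upgraded to a uniform one through Proposition \ref{prop:approx}.

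\emph{Part (i).} Setting $n=1$ and $r_1 = d$ in Theorems \ref{thm:uniform} and \ref{thm:taylor} gives $\munif_1(x^d) = \mtayl_1(x^d) = \prod_{i=1}^1 (r_i+1) = d+1$, which is exactly the claim. For completeness I would also give the short self-contained univariate argument. The upper bound $\mtayl_1(x^d) \le d+1$ is immediate from Proposition \ref{prop:univariate}; since $x^d$ is homogeneous, Proposition \ref{prop:approx} upgrades this to $\munif_1(x^d) \le d+1$. For the matching lower bound I would write a depth-$1$ network as $N(x) = \sum_{j=1}^m c_j \s(a_j x)$ (merging any repeated $a_j$, so the $a_j$ may be taken distinct), whose degree-$i$ Taylor coefficient is $\s_i \sum_j c_j a_j^i$. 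Taylor-approximating $x^d$ forces $\sum_j c_j a_j^i = 0$ for $i<d$ and $\sum_j c_j a_j^d = 1/\s_d \ne 0$; equivalently the functional $q \mapsto \sum_j c_j q(a_j)$ must read off the $x^d$-coefficient of every degree-$\le d$ polynomial $q$. If $m \le d$, the polynomial $q(x) = x^{d-m}\prod_{j=1}^m (x - a_j)$ has degree $d$, leading coefficient $1$, and vanishes at every $a_j$, so the functional returns $0$ instead of $1$ --- a contradiction. Hence $m \ge d+1$. For the uniform lower bound $\munif_1(x^d) \ge d+1$ I would either invoke Theorem \ref{thm:uniform}(i) directly, or run a compactness argument: take networks $N_t \to x^d$ uniformly with $m \le d$ neurons and, after extracting a subsequence, pass to the limit of the parameters $(a_j,c_j)$ to obtain an exact shallow representation of $x^d$, contradicting the Taylor bound just proved.

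\emph{Part (ii).} Write $b = \lceil \log_2 d\rceil$ and expand $d$ in binary. The construction computes $x^d$ by repeated squaring: maintaining a ``current square'' $x^{2^k}$ and a ``running product,'' each layer squares the current value (a univariate degree-$2$ map, realizable with three neurons by part (i) with $d=2$) and, when the $k$th binary digit of $d$ is $1$, multiplies the running product by $x^{2^k}$ (a two-input product, realizable with $4$ neurons by the $n=2$ case of Theorem \ref{thm:uniform}). Charging one squaring gadget and one product gadget to each of the $b$ binary digits gives a total of at most $7b = 7\lceil\log_2 d\rceil$ neurons for a network that Taylor-approximates $x^d$. Crucially, this is $4$ fewer than the $7\lceil\log_2 d\rceil + 4$ of Theorem \ref{thm:uniform}(ii): in the multivariate construction the ``$+4$'' per variable pays for folding that variable's power into a global cross-variable product, whereas with a single variable the final square $x^d$ is itself the output and no such multiplication is needed. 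Since $x^d$ is homogeneous, Proposition \ref{prop:approx} converts this Taylor-approximating network into a uniform $\ep$-approximating one with the same number of neurons in each layer, giving $\munif(x^d) \le \mtayl(x^d) \le 7\lceil\log_2 d\rceil$.

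\emph{Main obstacle.} The delicate step is the neuron bookkeeping in part (ii): one must interleave the squaring and product gadgets across layers, account for the cost of carrying intermediate values forward through layers (including the running product when a binary digit is $0$), and verify that the single-variable savings indeed brings the constant down from $7\lceil\log_2 d\rceil+4$ to $7\lceil\log_2 d\rceil$ while keeping every gadget Taylor-exact at degree $d$ (using that $\s$ has nonzero Taylor coefficients up to degree $2d$). A secondary subtlety, if one insists on a self-contained proof of part (i), is the uniform lower bound: ruling out the case where the weights $c_j$ blow up as $\ep \to 0$ while cancellation keeps $N$ bounded is exactly the point the compactness argument must handle.
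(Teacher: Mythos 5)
Your proposal is correct and follows essentially the same route as the paper: part (i) by specializing Theorems \ref{thm:uniform} and \ref{thm:taylor} to $n=1$, $r_1=d$, and part (ii) by the binary-expansion repeated-squaring network with a $3$-neuron square gate and a $4$-neuron product gate (or identity gate) per layer, totaling at most $7\lceil\log_2 d\rceil$ neurons, upgraded from Taylor to uniform approximation via Proposition \ref{prop:approx} using homogeneity of $x^d$. Your supplementary self-contained Vandermonde-style lower bound for $\mtayl_1(x^d)$ is a correct bonus, and your honest flag that the compactness variant for the uniform lower bound is unproved is harmless since you also invoke Theorem \ref{thm:uniform}(i) directly, exactly as the paper does.
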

\begin{proof}
Part (i) follows from part (i) of Theorems \ref{thm:uniform} and \ref{thm:taylor} by setting $n=1$ and $r_1=d$.

For part (ii), observe that we can Taylor-approximate the square $x^2$ of an input $x$ with three neurons in a single layer:
$$
\frac{1}{2\s''(0)}\left(\s(x) + \s(-x) - 2\s(0)\right) = x^2 + \mathcal{O}(x^4 + x^5 + \ldots).
$$
We refer to this construction as a \emph{square gate}, and the construction of \citet{lin2016does} as a \emph{product gate}.  We also use \emph{identity gate} to refer to a neuron that simply preserves the input of a neuron from the preceding layer (this is equivalent to the \emph{skip connections} in residual nets \citep{he2016deep}).

Consider a network in which each layer contains a square gate (3 neurons) and either a product gate or an identity gate (4 or 1 neurons, respectively), according to the following construction: The square gate squares the output of the preceding square gate, yielding inductively a result of the form $x^{2^k}$, where $k$ is the depth of the layer.  Writing $d$ in binary, we use a product gate if there is a 1 in the $2^{k-1}$-place; if so, the product gate multiplies the output of the preceding product gate by the output of the preceding square gate.  If there is a 0 in the $2^{k-1}$-place, we use an identity gate instead of a product gate.  Thus, each layer computes $x^{2^k}$ and multiplies $x^{2^{k-1}}$ to the computation if the $2^{k-1}$-place in $d$ is 1.  The process stops when the product gate outputs $x^d$.

This network clearly uses at most $7\lceil\log_2(d)\rceil$ neurons, with a worst case scenario where $d+1$ is a power of 2. Hence $\mtayl(p) \le 7\lceil\log_2(d)\rceil$, with $\munif(p) \le \mtayl(p)$ by Proposition \ref{prop:approx} since $p$ is homogeneous.
\end{proof}

\section{How efficiency improves with depth}
\label{sec:depth}

We now consider how $\munif_k(p)$ scales with $k$, interpolating between exponential in $n$ (for $k=1$) and linear in $n$ (for $k=\log n$).  In practice, networks with modest $k > 1$ are effective at representing natural functions.  We explain this theoretically by showing that the cost of approximating the product polynomial drops off rapidly as $k$ increases.

By repeated application of the shallow network construction in \citet{lin2016does}, we obtain the following upper bound on $\munif_k(p)$, which we conjecture to be essentially tight.  Our approach leverages the compositionality of polynomials, as discussed e.g.~in \citet{mhaskar2016learning} and \citet{poggio2017and}, using a tree-like neural network architecture.

\begin{thm}
Let $p(\x)$ equal the product $x_1x_2\cdots x_n$, and suppose $\s$ has nonzero Taylor coefficients up to degree $n$. Then, we have:
\begin{equation}
\munif_k(p) = \mathcal{O}\left(n^{(k-1)/k}\cdot 2^{n^{1/k}}\right).\label{eqn:constantlayers}
\end{equation}
\label{thm:constantlayers}
\end{thm}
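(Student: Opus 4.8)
The plan is to build a balanced tree of ``product gates,'' using the shallow construction of \citet{lin2016does} at each node, and then to pass from Taylor to uniform approximation only at the very end. Recall that multiplying $b$ quantities with a single hidden layer Taylor-approximates their product using $2^b$ neurons; this is precisely the $r_1 = \cdots = r_b = 1$ case of Theorem~\ref{thm:taylor}(i). I would set the branching factor to $b = \lceil n^{1/k}\rceil$ and construct the network layer by layer: the first hidden layer partitions the $n$ inputs into $\lceil n/b\rceil$ groups of size at most $b$ and places a product gate on each group; every subsequent layer treats the partial products output by the previous layer as its new inputs, again partitioning them into groups of size at most $b$ and multiplying each group with a product gate. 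Since $b^k \ge n$, after $k$ layers a single partial product remains, namely an approximation to $x_1 x_2 \cdots x_n$.

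The correctness argument proceeds by induction on the layer index, showing that after layer $j$ the network Taylor-approximates each of the corresponding degree-$b^j$ partial products. The key point is that a product gate, viewed as a function of its (polynomial-valued) inputs $y_1,\ldots,y_b$, outputs $y_1 \cdots y_b$ plus terms of strictly higher order in the $y_i$; substituting the inductive hypothesis that each $y_i$ equals its true partial product plus higher-order corrections, the lowest-degree homogeneous part of the gate's output is exactly the product of the true partial products, while every discrepancy is pushed to strictly higher degree. Thus the Taylor-approximation property is preserved through composition. Each gate multiplies at most $b \le n$ factors, so the hypothesis that $\s$ has nonzero Taylor coefficients up to degree $n$ supplies the nonzero coefficient the product-gate construction requires. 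Finally, because $x_1 \cdots x_n$ is homogeneous, Proposition~\ref{prop:approx} upgrades this Taylor approximation to a uniform $\ep$-approximation with the same neuron count, which is exactly what bounds $\munif_k(p)$.

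It remains to count neurons. Layer $j$ contains at most $\lceil n/b^j\rceil$ product gates, each using at most $2^b$ neurons, so the total is at most $2^b \sum_{j=1}^{k} \lceil n/b^j \rceil$. I would bound the geometric sum by $\sum_{j\ge 1} n/b^j = n/(b-1) = \mathcal{O}(n/b) = \mathcal{O}(n^{(k-1)/k})$ for $b \ge 2$ (absorbing the ceilings into the $\mathcal{O}$), and note that $2^b = 2^{\lceil n^{1/k}\rceil} \le 2\cdot 2^{n^{1/k}}$. Multiplying the two factors yields the claimed $\mathcal{O}\!\left(n^{(k-1)/k} \cdot 2^{n^{1/k}}\right)$.

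The routine parts are the geometric-series estimate and the ceiling bookkeeping, including the harmless case where $n$ is not a perfect $k$th power, in which the final groups are simply smaller and a few leftover variables may be carried forward by identity gates. The step that genuinely needs care is the inductive correctness claim: verifying that the composition of approximate product gates still Taylor-approximates the full product, i.e.\ that the errors introduced at each level stay of higher degree than the intended term and never corrupt the leading monomial. Once that is established, the balancing choice $b = n^{1/k}$ is exactly what equalizes the two competing costs---the $2^b$ width of each gate and the $n/b$ number of gates in the first layer---and delivers the stated bound.
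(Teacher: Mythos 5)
Your proposal is correct and follows essentially the same route as the paper: a $k$-layer tree of Lin--Tegmark product gates with branching factor $n^{1/k}$, neuron count dominated by the first layer's $2^{b}\,n/b$ term, and Proposition~\ref{prop:approx} (applicable since $x_1\cdots x_n$ is homogeneous) converting the Taylor approximation into a uniform one. The paper merely phrases the construction with general group sizes $b_1,\ldots,b_k$ satisfying $b_1\cdots b_k=n$ before specializing to $b_i=n^{1/k}$, and it leaves implicit the compositional-correctness induction and ceiling bookkeeping that you spell out.
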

\begin{proof}
We construct a network in which groups of the $n$ inputs are recursively multiplied up to Taylor approximation.  The $n$ inputs are first divided into groups of size $b_1$, and each group is multiplied in the first hidden layer using $2^{b_1}$ neurons (as described in \citet{lin2016does}).  Thus, the first hidden layer includes a total of $2^{b_1}n/b_1$ neurons. This gives us $n/b_1$ values to multiply, which are in turn divided into groups of size $b_2$.  Each group is multiplied in the second hidden layer using $2^{b_2}$ neurons.  Thus, the second hidden layer includes a total of $2^{b_2}n/(b_1b_2)$ neurons.

We continue in this fashion for $b_1, b_2, \ldots, b_k$ such that $b_1b_2\cdots b_k = n$, giving us one neuron which is the product of all of our inputs.  By considering the total number of neurons used, we conclude
\begin{equation}
\mtayl_k(p) \le \sum_{i = 1}^k \frac{n}{\prod_{j = 1}^i b_j} 2^{b_i} = \sum_{i = 1}^k \left(\prod_{j = {i + 1}}^k b_j\right) 2^{b_i}.
\label{eqn:branching}
\end{equation}
By Proposition \ref{prop:approx}, $\munif_k(p) \le \mtayl_k(p)$ since $p$ is homogeneous. Setting $b_i = n^{1/k}$, for each $i$, gives us the desired bound (\ref{eqn:constantlayers}).
\end{proof}

In fact, we can solve for the choice of $b_i$ such that the upper bound in (\ref{eqn:branching}) is minimized, under the condition $b_1b_2\cdots b_k = n$.  Using the technique of Lagrange multipliers, we know that the optimum occurs at a minimum of the function
$$
\mathcal{L}(b_i, \lb):= \left(n - \prod_{i=1}^k b_i\right)\lb + \sum_{i = 1}^k \left(\prod_{j = {i + 1}}^k b_j \right)2^{b_i}.
$$
Differentiating $\mathcal{L}$ with respect to $b_i$, we obtain the conditions
\begin{align}
0 &= -\lb \prod_{j\ne i} b_j + \sum_{h=1}^{i-1} \left(\frac{\prod_{j=h+1}^k b_j}{b_i}\right)2^{b_h} + (\log 2) \left(\prod_{j = i+1}^k b_j\right) 2^{b_i},\text{ for $1\le i\le k$}\label{eqn:lagrangeone}\\
0 &= n - \prod_{j = 1}^k b_j.\label{eqn:lagrangetwo}
\end{align}
Dividing (\ref{eqn:lagrangeone}) by $\prod_{j = i+1}^k b_j$ and rearranging gives us the recursion
\begin{equation}
b_i = b_{i-1} + \log_2(b_{i-1} - 1/\log 2).\label{eqn:b}
\end{equation}
Thus, the optimal $b_i$ are not exactly equal but very slowly increasing with $i$ (see Figure \ref{fig:plot_b}).

\begin{figure}
\centering
\begin{minipage}{0.48\textwidth}
  \centering
  \vskip-17mm
\includegraphics[width=0.9\linewidth]{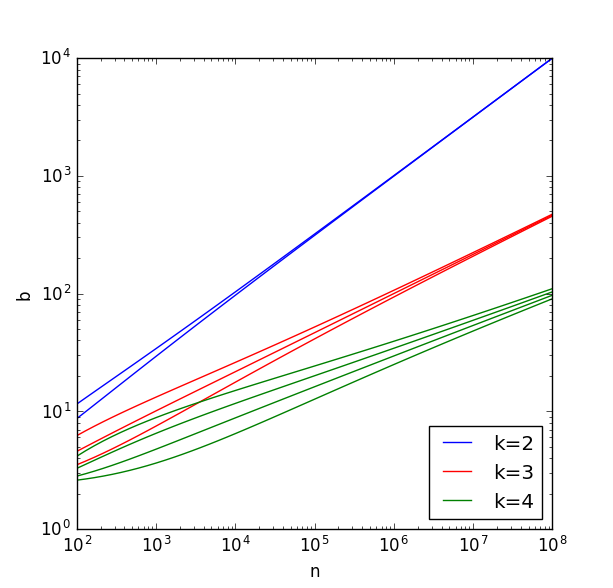}
\caption{\label{fig:plot_b}The optimal settings for $\{b_i\}_{i=1}^k$ as $n$ varies are shown for $k=1,2,3$. Observe that the $b_i$ converge to $n^{1/k}$ for large $n$, as witnessed by a linear fit in the log-log plot. The exact values are given by equations (\ref{eqn:lagrangetwo}) and (\ref{eqn:b}).}
\end{minipage}%
\hspace{12pt}
\begin{minipage}{0.48\textwidth}
  \centering
\includegraphics[width=0.9\linewidth]{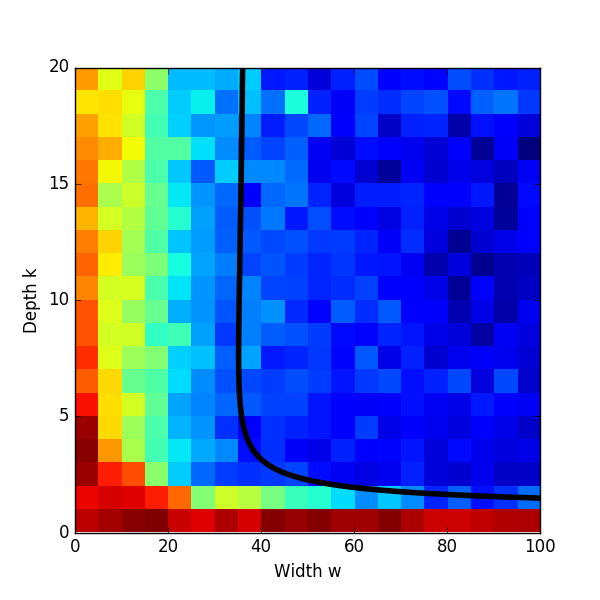}
\caption{\label{fig:prodtest}Performance of trained networks in approximating the product of $20$ input variables, ranging from red (high error) to blue (low error).  The error shown here is the expected absolute difference between the predicted and actual product.  The curve $w=n^{(k-1)/k}\cdot 2^{n^{1/k}}$ for $n=20$ is shown in black.  In the region above and to the right of the curve, it is possible to effectively approximate the product function (Theorem \ref{thm:constantlayers}).}
\vspace{-2pt}
\end{minipage}
\end{figure}

The following conjecture states that the bound given in Theorem \ref{thm:constantlayers} is (approximately) optimal.
\begin{conj}
Let $p(\x)$ equal to the product $x_1x_2\cdots x_n$, and suppose that $\s$ has all nonzero Taylor coefficients.  Then, we have:
\begin{equation}
\label{PowerEq}
\munif_k(p) = 2^{\Theta(n^{1/k})},
\end{equation}
\label{conj:deep}
\end{conj}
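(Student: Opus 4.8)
The plan is to separate the two directions of the $\Theta$, since one is already established. The upper bound $\munif_k(p)=2^{\mathcal{O}(n^{1/k})}$ is in hand: Theorem \ref{thm:constantlayers} gives $\munif_k(p)=\mathcal{O}\!\left(n^{(k-1)/k}\cdot 2^{n^{1/k}}\right)$, and since $\log_2 n = o(n^{1/k})$ for every fixed $k$, the prefactor $n^{(k-1)/k}=2^{\frac{k-1}{k}\log_2 n}$ contributes only $o(n^{1/k})$ to the exponent, so $\munif_k(p)=2^{n^{1/k}(1+o(1))}=2^{\mathcal{O}(n^{1/k})}$. Hence the entire content of the conjecture is the matching lower bound $\munif_k(p)=2^{\Omega(n^{1/k})}$, which I would attempt by induction on $k$.

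For the base case $k=1$, Theorem \ref{thm:uniform}(i) already gives the exact value $\munif_1(p)=2^n=2^{\Omega(n)}$. The engine behind that bound is that the degree-$n$ Taylor component of a one-hidden-layer network is a sum of $m$ $n$th powers of affine forms, and expressing the multilinear monomial $x_1\cdots x_n$ as such a sum requires at least $2^{n-1}$ terms (the Waring/symmetric-rank fact underlying part (i)). The right way to view a general depth-$k$ network is therefore as a nested ``sum-of-powers'' ($\Sigma\Lambda$) expression: each $\s$-layer of width $w$ replaces its inputs by $w$ power-series-of-affine-form combinations, so the homogeneous degree-$n$ component is a depth-$k$ circuit whose cost at each level is governed by the same rank phenomenon. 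Multiplying $b$ quantities inside a single layer costs $\gtrsim 2^b$, which is exactly what the balanced tree of Theorem \ref{thm:constantlayers} pays.

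For the inductive step, I would fix an optimal depth-$k$ network and examine its first hidden layer, of width $m_1$; the remaining network is depth $k-1$ acting on those $m_1$ quantities. The goal is a balancing dichotomy: either $m_1\ge 2^{c\,n^{1/k}}$ and we are done, or the first layer mixes few enough variables that, after a suitable restriction of the inputs, the target $x_1\cdots x_n$ collapses to a scalar multiple of a product of roughly $n^{(k-1)/k}$ effective variables that must be produced by the depth-$(k-1)$ remainder, to which the inductive hypothesis applies and yields $2^{\Omega\left((n^{(k-1)/k})^{1/(k-1)}\right)}=2^{\Omega(n^{1/k})}$. Making ``effective variables'' precise — choosing a restriction that simultaneously trivializes the first layer and preserves the product structure of the target — is the crux. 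A complementary route is to run the partial-derivatives (or shifted-partial-derivatives) dimension method directly on the nested sum-of-powers form, bounding the dimension of the space of iterated partial derivatives a depth-$k$, total-size-$m$ circuit can span and comparing it with that of $x_1\cdots x_n$, thereby avoiding the layer-peeling subtlety entirely.

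The hard part is the inductive step, and this is precisely why the statement remains a conjecture. The obstacle is the one that makes depth-hierarchy theorems for arithmetic circuits difficult: a single $\s$-layer can in principle share and recombine partial products across its $w$ neurons in ways the balanced tree never exploits, so one must rule out clever cross-layer reuse rather than merely count. The product monomial is moreover ``easy'' in the sense of having a single monomial, so support- or monomial-counting measures are useless; the bound must exploit the restricted nested-powering structure (the per-layer Waring-rank cost) rather than any intrinsic hardness of $x_1\cdots x_n$. Controlling how this cost degrades across one layer, and propagating it through the induction with the correct constant so as to land on the exponent $n^{1/k}$ rather than a far weaker $n^{1/2^k}$, is where I expect the genuine difficulty to lie; it likely requires either a new submultiplicative complexity measure tailored to iterated sum-of-powers circuits or a substantial adaptation of the shifted-partial-derivatives machinery.
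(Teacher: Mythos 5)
You have not proved this statement, and neither does the paper: it is stated there as a conjecture. The paper establishes only the upper-bound half, $\munif_k(p) \le \mathcal{O}\bigl(n^{(k-1)/k}\cdot 2^{n^{1/k}}\bigr)$, via the balanced recursive-multiplication tree of Theorem \ref{thm:constantlayers}, and supports the matching lower bound with experiments (Figure \ref{fig:prodtest}) rather than a proof. Your treatment of the upper bound is correct and is essentially identical to the paper's route, including the observation that the polynomial prefactor $n^{(k-1)/k} = 2^{\frac{k-1}{k}\log_2 n}$ is absorbed into $2^{\mathcal{O}(n^{1/k})}$; and your base case $\munif_1(p) = 2^n$ is exactly Theorem \ref{thm:uniform}(i) with $r_1=\cdots=r_n=1$ (minor quibble: the paper's partial-derivative rank argument gives the neuron bound $2^n$ directly, which is stronger than the classical Waring-rank bound $2^{n-1}$ you cite, though both suffice for $2^{\Omega(n)}$). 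So judged as a proof of the conjecture, the proposal has a genuine and admitted gap: the entire lower-bound direction $\munif_k(p) = 2^{\Omega(n^{1/k})}$ for $k \ge 2$.

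Beyond being incomplete, the inductive dichotomy you sketch would fail as stated. Your ``either $m_1 \ge 2^{c n^{1/k}}$ or the first layer mixes few enough variables'' step conflates width with variable participation: a single neuron $\s\bigl(\sum_i a_i x_i\bigr)$ depends on all $n$ inputs, so a first hidden layer of small width places no bound whatsoever on how many variables each neuron mixes, and there is no restriction of the inputs that simultaneously trivializes the nonlinearity $\s$ on that layer and leaves a clean product of $\approx n^{(k-1)/k}$ effective variables for the depth-$(k-1)$ remainder. The degree-$n$ component of the target can be assembled from high-order Taylor terms of very few wide-support neurons, which is precisely the cross-layer reuse you correctly flag as the obstruction. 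Your alternative suggestion --- running a (shifted) partial-derivative dimension bound directly on the nested sum-of-powers structure --- is a more plausible route and is the natural extension of the paper's own depth-1 technique, but the paper does not carry it out either, and as you note, no known submultiplicative measure currently propagates the per-layer $2^b$ cost through $k$ layers to yield the exponent $n^{1/k}$. In short: your upper bound matches the paper, your lower-bound plan correctly diagnoses why the problem is hard, but the statement remains unproven by both you and the authors.
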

i.e., the exponent grows as $n^{1/k}$ for $n\to\infty$.

We empirically tested Conjecture \ref{conj:deep} by training ANNs to predict the product of input values $x_1,\ldots, x_n$ with $n=20$ (see Figure \ref{fig:prodtest}).  The rapid interpolation from exponential to linear width aligns with our predictions.

In our experiments, we used feedforward networks with dense connections between successive layers.  In the figure, we show results for $\s(x)=\tanh(x)$ (note that this behavior is even better than expected, since this function actually has numerous zero Taylor coefficients).  Similar results were also obtained for rectified linear units (ReLUs) as the nonlinearity, despite the fact that this function does not even admit a Taylor series.  The number of layers was varied, as was the number of neurons within a single layer.  The networks were trained using the AdaDelta optimizer \citep{zeiler2012adadelta} to minimize the absolute value of the difference between the predicted and actual values.  Input variables $x_i$ were drawn uniformly at random from the interval $[0,2]$, so that the expected value of the output would be of manageable size.

Eq.~(\ref{PowerEq}) provides a helpful rule of thumb for how deep is deep enough. Suppose, for instance, that we wish to keep typical layers no wider than about a thousand ($\sim 2^{10}$) neurons. 
Eq.~(\ref{PowerEq}) then implies $n^{1/k}\simlt 10$, {\it i.e.}, that the number of layers should be at least
$$
k\simgt\log_{10} n.
$$

It would be very interesting if one could show that general polynomials $p$ in $n$ variables require a superpolynomial number of neurons to approximate for any constant number of hidden layers.  The analogous statement for Boolean circuits - whether the complexity classes $TC^0$ and $TC^1$ are equal - remains unresolved and is assumed to be quite hard.  Note that the formulations for Boolean circuits and deep neural networks are independent statements (neither would imply the other) due to the differences between computation on binary and real values.  Indeed, gaps in expressivity have already been proven to exist for real-valued neural networks of different depths, for which the analogous results remain unknown in Boolean circuits (see e.g.~\citet{mhaskar1993approximation, chui1994neural, chui1996limitations, montufar2014number, cohen2015expressive, telgarsky2016benefits}).

\section{Conclusion}
We have shown how the power of deeper ANNs can be quantified even for simple polynomials.  We have proved that arbitrarily good approximations of polynomials are possible even with a fixed number of neurons and that there is an exponential gap between the width of shallow and deep networks required for approximating a given sparse polynomial.  For $n$ variables, a shallow network requires size exponential in $n$, while a deep network requires at most linearly many neurons.  Networks with a constant number $k > 1$ of hidden layers appear to interpolate between these extremes, following a curve exponential in $n^{1/k}$. 
This suggests a rough heuristic for the number of layers required for approximating simple functions with neural networks.  For example, if we want no layers to have more than $2^{10}$ neurons, say, then the minimum number of layers required grows only as $\log_{10} n$.
To further improve efficiency using the $\mathcal{O}(n)$ constructions we have presented, 
it suffices to increase the number of layers by a factor of $\log_2 10\approx 3$, to $\log_2 n$.

The key property we use in our constructions is compositionality, as detailed in \citet{poggio2017and}. It is worth noting that as a consequence our networks enjoy the property of \emph{locality} mentioned in \citet{cohen2015expressive}, which is also a feature of convolutional neural nets.  That is, each neuron in a layer is assumed to be connected only to a small subset of neurons from the previous layer, rather than the entirety (or some large fraction). In fact, we showed (e.g.~Prop.~\ref{prop:logdeep}) that there exist natural functions computable with linearly many neurons, with each neuron is connected to at most \emph{two} neurons in the preceding layer, which nonetheless cannot be computed with fewer than exponentially many neurons in a single layer, no matter how may connections are used.  Our construction can also be framed with reference to the other properties mentioned in \citet{cohen2015expressive}: those of \emph{sharing} (in which weights are shared between neural connections) and \emph{pooling} (in which layers are gradually collapsed, as our construction essentially does with recursive combination of inputs).

This paper has focused exclusively on the resources (neurons and synapses) required to {\it compute} a given function for fixed network depth.  (Note also results of \citet{lu2017expressive, hanin2017approximating, hanin2017universal} for networks of fixed width.)  An important complementary challenge is to quantify the resources (e.g. training steps) required to {\it learn} the computation, i.e., to converge to appropriate weights using training data --- possibly a fixed amount thereof, as suggested in \citet{zhang2016understanding}. There are simple functions that can be computed with polynomial resources but require exponential resources to learn \citep{shalev2017failures}. It is quite possible that architectures we have not considered increase the feasibility of learning. For example, residual networks (ResNets) \citep{he2016deep} and unitary nets (see e.g.~\citet{arjovsky2016unitary, jing2017tunable}) are no more powerful in representational ability than conventional networks of the same size, but by being less susceptible to the ``vanishing/exploding gradient'' problem, it is far easier to optimize them in practice. We look forward to future work that will help us understand the power of neural networks to learn.

\section{Acknowledgments}
This work was supported by the Foundational Questions Institute \url{http://fqxi.org/}, the Rothberg Family Fund for Cognitive Science and NSF grant 1122374. We would like to thank Tomaso Poggio, Scott Aaronson, Surya Ganguli, David Budden, Henry Lin, and the anonymous referees for helpful suggestions and discussions, and the Center for Brains, Minds, \& Machines for an excellent working environment.

\bibliographystyle{iclr2018_conference}
\bibliography{references}

\section*{Appendix}
\subsection{Proof of Theorem \ref{thm:uniform}.}
Without loss of generality, suppose that $r_i > 0$ for $i=1,\ldots,n$.  Let $X$ be the multiset in which $x_i$ occurs with multiplicity $r_i$.

We first show that $\prod_{i=1}^n (r_i + 1)$ neurons are \emph{sufficient} to approximate $p(\x)$.  Appendix A in \cite{lin2016does} demonstrates that for variables $y_1,\ldots, y_N$, the product $y_1\cdot \cdots \cdot y_N$ can be Taylor-approximated as a linear combination of the $2^N$ functions $\s(\pm y_1\pm \cdots \pm y_d)$.

Consider setting $y_1,\ldots,y_d$ equal to the elements of multiset $X$.  Then, we conclude that we can approximate $p(\x)$ as a linear combination of the functions $\s(\pm y_1\pm \cdots \pm y_d)$.  However, these functions are not all distinct: there are $r_i+1$ distinct ways to assign $\pm$ signs to $r_i$ copies of $x_i$ (ignoring permutations of the signs).  Therefore, there are $\prod_{i=1}^n (r_i + 1)$ distinct functions $\s(\pm y_1\pm \cdots \pm y_N)$, proving that $\mtayl(p) \le \prod_{i=1}^n (r_i + 1)$.  Proposition \ref{prop:approx} implies that for homogeneous polynomials $p$, we have $\munif_1(p) \le \mtayl_1(p)$.

We now show that this number of neurons is also \emph{necessary} for approximating $p(\x)$. Suppose that $N_\ep(\x)$ is an $\ep$-approximation to $p(\x)$ with depth 1, and let the Taylor series of $N_\ep(\x)$ be $p(\x) + E(\x)$.   Let $E_k(\x)$ be the degree-$k$ homogeneous component of $E(\x)$, for $0\le k\le 2d$.  By the definition of $\ep$-approximation, $\sup_\x E(\x)$ goes to $0$ as $\ep$ does, so by picking $\ep$ small enough, we can ensure that the coefficients of each $E_k(\x)$ go to 0. 

Let $m = \munif_1(p)$ and suppose that $\s(x)$ has the Taylor expansion $\sum_{k=0}^\infty \s_k x^k$.  Then, by grouping terms of each order, we conclude that there exist constants $a_{ij}$ and $w_j$ such that
\begin{align*}
\s_d\sum_{j=1}^m w_j\left(\sum_{i=1}^n a_{ij} x_i\right)^d &= p(\x)+E_d(\x)\\
\s_k\sum_{j=1}^m w_j\left(\sum_{i=1}^n a_{ij} x_i\right)^k &= E_k(\x) \hskip .1 in\text{ for $k\ne d$.}
\end{align*}
For each $S\subseteq X$, let us take the derivative of this equation by every variable that occurs in $S$, where we take multiple derivatives of variables that occur multiple times.  This gives
\begin{align}
\frac{\s_d\cdot d!}{|S|!} \sum_{j=1}^m w_j\prod_{h\in S} a_{hj}\left(\sum_{i=1}^n a_{ij} x_i\right)^{d-|S|} &= \frac{\partial}{\partial S}p(\x)+\frac{\partial}{\partial S}E_d(\x),\label{uniformd}\\
\frac{\s_k\cdot k!}{|S|!} \sum_{j=1}^m w_j\prod_{h\in S} a_{hj}\left(\sum_{i=1}^n a_{ij} x_i\right)^{k-|S|} &= \frac{\partial}{\partial S}E_k(\x).\label{uniformk}
\end{align}

Observe that there are $r\equiv \prod_{i=1}^n (r_i + 1)$ choices for $S$, since each variable $x_i$ can be included anywhere from $0$ to $r_i$ times.  Define ${\bf A}$ to be the $r \times m$ matrix with entries $A_{S,j} = \prod_{h \in S} a_{hj}$.  We claim that ${\bf A}$ has full row rank.  This would show that the number of columns $m$ is at least the number of rows $r=\prod_{i=1}^n (r_i + 1)$, proving the desired lower bound on $m$.

Suppose towards contradiction that the rows $A_{S_\ell,\bullet}$ admit a linear dependence:
$$
\sum_{\ell=1}^r c_{\ell} A_{S_{\ell},\bullet} = \textbf{0},
$$
where the coefficients $c_{\ell}$ are all nonzero and the $S_{\ell}$ denote distinct subsets of $X$.  Let $S_{\ast}$ be such that $|c_{\ast}|$ is maximized.  Then, take the dot product of each side of the above equation by the vector with entries (indexed by $j$) equal to $w_j\left(\sum_{i=1}^n a_{ij} x_i\right)^{d-|S_{\ast}|}$:
\begin{align*}
0 &= \sum_{\ell=1}^r c_{\ell} \sum_{j=1}^m w_j\prod_{h \in S_{\ell}} a_{hj} \left(\sum_{i=1}^n a_{ij} x_i\right)^{d-|S_{\ast}|} \\
&= \sum_{\ell \mid (|S_{\ell}| = |S_{\ast}|)} c_{\ell} \sum_{j=1}^m w_j\prod_{h \in S_{\ell}} a_{hj} \left(\sum_{i=1}^n a_{ij} x_i\right)^{d-|S_{\ell}|} \\
&+ \sum_{\ell \mid (|S_{\ell}| \ne |S_{\ast}|)} c_{\ell} \sum_{j=1}^m w_j\prod_{h \in S_{\ell}} a_{hj} \left(\sum_{i=1}^n a_{ij} x_i\right)^{(d+|S_{\ell}|-|S_{\ast}|)-|S_{\ell}|}.
\end{align*}
We can use (\ref{uniformd}) to simplify the first term and (\ref{uniformk}) (with $k=d+|S_{\ell}|-|S_{\ast}|$) to simplify the second term, giving us:
\begin{align}
0 &= \sum_{\ell \mid (|S_{\ell}| = |S_{\ast}|)} c_{\ell} \cdot \frac{|S_{\ell}|!}{\s_d \cdot d!} \cdot \left(\frac{\partial}{\partial S_\ell}p(\x)  + \frac{\partial}{\partial S_\ell}E_d(\x)\right) \label{contra}\\
&+ \sum_{\ell \mid (|S_{\ell}| \ne |S_{\ast}|)} c_{\ell} \cdot \frac{|S_{\ell}|!}{\s_{d+|S_{\ell}|-|S_{\ast}|} \cdot (d+|S_{\ell}|-|S_{\ast}|)!} \cdot \frac{\partial}{\partial S_\ell}E_{d+|S_{\ell}|-|S_{\ast}|}(\x)\nonumber
\end{align}
Consider the coefficient of the monomial $\frac{\partial}{\partial S_\ast}p(\x)$, which appears in the first summand with coefficient $c_\ast \cdot \frac{|S_\ast|!}{\s_d \cdot d!}$.  Since the $S_\ell$ are distinct, this monomial does not appear in any other term $\frac{\partial}{\partial S_\ell}p(\x)$, but it could appear in some of the terms $\frac{\partial}{\partial S_\ell}E_k(\x)$.

By definition, $|c_\ast|$ is the largest of the values $|c_\ell|$, and by setting $\ep$ small enough, all coefficients of $\frac{\partial}{\partial S_\ell}E_k(\x)$ can be made negligibly small for every $k$.  This implies that the coefficient of the monomial $\frac{\partial}{\partial S_\ast}p(\x)$ can be made arbitrarily close to $c_\ast \cdot \frac{|S_\ast|!}{\s_d \cdot d!}$, which is nonzero since $c_\ast$ is nonzero.

However, the left-hand side of equation (\ref{contra}) tells us that this coefficient should be zero - a contradiction.  We conclude that ${\bf A}$ has full row rank, and therefore that $\munif_1(p) = m \ge \prod_{i=1}^n (r_i + 1)$.  This completes the proof of part (i).

We now consider part (ii) of the theorem.  It follows from Proposition \ref{prop:logdeep}, part (ii) that, for each $i$, we can Taylor-approximate $x_i^{r_i}$ using $7\lceil\log_2(r_i)\rceil$ neurons arranged in a deep network.  Therefore, we can Taylor-approximate all of the $x_i^{r_i}$ using a total of $\sum_i 7\lceil\log_2(r_i)\rceil$ neurons.  From \cite{lin2016does}, we know that these $n$ terms can be multiplied using $4n$ additional neurons, giving us a total of $\sum_i (7\lceil\log_2(r_i)\rceil + 4)$.  Proposition \ref{prop:approx} implies again that $\munif_1(p) \le \mtayl_1(p)$. This completes the proof.

\subsection{Proof of Theorem \ref{thm:taylor}.}
As above, suppose that $r_i > 0$ for $i=1,\ldots,n$, and let $X$ be the multiset in which $x_i$ occurs with multiplicity $r_i$.

It is shown in the proof of Theorem \ref{thm:uniform} that $\prod_{i=1}^n (r_i + 1)$ neurons are \emph{sufficient} to Taylor-approximate $p(x)$.  We now show that this number of neurons is also \emph{necessary} for approximating $p(\x)$.  Let $m = \mtayl_1(p)$ and suppose that $\s(x)$ has the Taylor expansion $\sum_{k=0}^\infty \s_k x^k$.  Then, by grouping terms of each order, we conclude that there exist constants $a_{ij}$ and $w_j$ such that
\begin{align}
\s_d\sum_{j=1}^m w_j\left(\sum_{i=1}^n a_{ij} x_i\right)^d &= p(\x) \label{goaln}\\
\s_k\sum_{j=1}^m w_j\left(\sum_{i=1}^n a_{ij} x_i\right)^k &= 0 \hskip .1 in\text{ for $0\le k \le N-1$.}\label{goalk}
\end{align}
For each $S\subseteq X$, let us take the derivative of equations (\ref{goaln}) and (\ref{goalk}) by every variable that occurs in $S$, where we take multiple derivatives of variables that occur multiple times.  This gives
\begin{align}
\frac{\s_d\cdot d!}{|S|!} \sum_{j=1}^m w_j\prod_{h\in S} a_{hj}\left(\sum_{i=1}^n a_{ij} x_i\right)^{d-|S|} &= \frac{\partial}{\partial S} p(\x)\label{derivn},\\
\frac{\s_k\cdot k!}{|S|!} \sum_{j=1}^m w_j\prod_{h\in S} a_{hj}\left(\sum_{i=1}^n a_{ij} x_i\right)^{k-|S|} &= 0
\label{derivk}
\end{align}
for $|S|\le k \le d-1$.
Observe that there are $r= \prod_{i=1}^n (r_i + 1)$ choices for $S$, since each variable $x_i$ can be included anywhere from $0$ to $r_i$ times.  Define ${\bf A}$ to be the $r\times m$ matrix with entries $A_{S,j} = \prod_{h \in S} a_{hj}$.  We claim that ${\bf A}$ has full row rank.  This would show that the number of columns $m$ is at least the number of rows $r=\prod_{i=1}^n (r_i + 1)$, proving the desired lower bound on $m$.

Suppose towards contradiction that the rows $A_{S_\ell,\bullet}$ admit a linear dependence:
$$
\sum_{\ell=1}^r c_{\ell} A_{S_{\ell},\bullet} = \textbf{0},
$$
where the coefficients $c_{\ell}$ are nonzero and the $S_{\ell}$ denote distinct subsets of $X$.  Set $s=\max_\ell |S_{\ell}|$.  Then, take the dot product of each side of the above equation by the vector with entries (indexed by $j$) equal to $w_j\left(\sum_{i=1}^n a_{ij} x_i\right)^{d-s}$:
\begin{align*}
0 &= \sum_{\ell=1}^r c_{\ell} \sum_{j=1}^m w_j\prod_{h \in S_{\ell}} a_{hj} \left(\sum_{i=1}^n a_{ij} x_i\right)^{d-s} \\
&= \sum_{\ell \mid (|S_{\ell}| = s)} c_{\ell} \sum_{j=1}^m w_j\prod_{h \in S_{\ell}} a_{hj} \left(\sum_{i=1}^n a_{ij} x_i\right)^{d-|S_{\ell}|} \\
&+ \sum_{\ell \mid (|S_{\ell}| < s)} c_{\ell} \sum_{j=1}^m w_j\prod_{h \in S_{\ell}} a_{hj} \left(\sum_{i=1}^n a_{ij} x_i\right)^{(d+|S_{\ell}|-s)-|S_{\ell}|}.
\end{align*}

We can use (\ref{derivn}) to simplify the first term and (\ref{derivk}) (with $k=d+|S_{\ell}|-s$) to simplify the second term, giving us:
\begin{align*}
0 &= \sum_{\ell \mid (|S_{\ell}| = s)} c_{\ell} \cdot \frac{|S_{\ell}|!}{\s_d \cdot d!} \cdot \frac{\partial}{\partial S_\ell} p(\x) + \sum_{\ell \mid (|S_{\ell}| < s)} c_{\ell} \cdot \frac{|S_{\ell}|!}{\s_{d+|S_{\ell}|-s} \cdot (d+|S_{\ell}|-s)!} \cdot 0 \\
&= \sum_{\ell \mid (|S_{\ell}| = s)} c_{\ell} \cdot \frac{|S_{\ell}|!}{\s_d\cdot d!} \cdot \frac{\partial}{\partial S_\ell} p(\x).
\end{align*}
Since the distinct monomials $\frac{\partial}{\partial S_\ell} p(\x)$ are linearly independent, this contradicts our assumption that the $c_{\ell}$ are nonzero.  We conclude that ${\bf A}$ has full row rank, and therefore that $\mtayl_1(p) = m \ge \prod_{i=1}^n (r_i + 1)$.  This completes the proof of part (i).

Part (ii) of the theorem was demonstrated in the proof of Theorem \ref{thm:uniform}.  This completes the proof.

\subsection{Proof of Theorem \ref{thm:sparse}.}
Our proof in Theorem \ref{thm:uniform} relied upon the fact that all nonzero partial derivatives of a monomial are linearly independent.  This fact is not true for general polynomials $p$; however, an exactly similar argument shows that $\munif_1(p)$ is at least the number of linearly independent partial derivatives of $p$, taken with respect to multisets of the input variables.

Consider the monomial $q$ of $p$ such that $\munif_1(q)$ is maximized, and suppose that $q(\x) = x_1^{r_1}x_2^{r_2}\cdots x_n^{r_n}$.  By Theorem \ref{thm:uniform}, $\munif_1(q)$ is equal to the number $\prod_{i=1}^n (r_i + 1)$ of distinct monomials that can be obtained by taking partial derivatives of $q$.  Let $Q$ be the set of such monomials, and let $D$ be the set of (iterated) partial derivatives corresponding to them, so that for $d\in D$, we have $d(q) \in Q$.

Consider the set of polynomials $P = \{d(p) \mid d \in D\}$.  We claim that there exists a linearly independent subset of $P$ with size at least $|D|/c$.  Suppose to the contrary that $P'$ is a maximal linearly independent subset of $P$ with $|P'| < |D| / c$.

Since $p$ has $c$ monomials, every element of $P$ has at most $c$ monomials.  Therefore, the total number of distinct monomials in elements of $P'$ is less than $|D|$.  However, there are at least $|D|$ distinct monomials contained in elements of $P$, since for $d\in D$, the polynomial $d(p)$ contains the monomial $d(q)$, and by definition all $d(q)$ are distinct as $d$ varies.  We conclude that there is some polynomial $p' \in P\backslash P'$ containing a monomial that does not appear in any element of $P'$.  But then $p'$ is linearly independent of $P'$, a contradiction since we assumed that $P'$ was maximal.

We conclude that some linearly independent subset of $P$ has size at least $|D| / c$, and therefore that the space of partial derivatives of $p$ has rank at least $|D| / c = \munif_1(q)/ c$.  This proves part (i) of the theorem.  Part (ii) follows immediately from the definition of $\munif(p)$.

Similar logic holds for $\mtayl$.

\subsection{Proof of Theorem \ref{thm:loose}.}
We will prove the desired lower bounds for $\munif_1(p)$; a very similar argument holds for $\mtayl_1(p)$. As above, suppose that $r_i > 0$ for $i=1,\ldots,n$.  Let $X$ be the multiset in which $x_i$ occurs with multiplicity $r_i$.

Suppose that $N_\ep(\x)$ is an $\ep$-approximation to $p(\x)$ with depth 1, and let the degree-$d$ Taylor polynomial of $N_\ep(\x)$ be $p(\x) + E(\x)$. Let $E_d(\x)$ be the degree-$d$ homogeneous component of $E(\x)$. Observe that the coefficients of the error polynomial $E_d(\x)$ can be made arbitrarily small by setting $\ep$ sufficiently small.  

Let $m = \munif_1(p)$ and suppose that $\s(x)$ has the Taylor expansion $\sum_{k=0}^\infty \s_k x^k$.  Then, by grouping terms of each order, we conclude that there exist constants $a_{ij}$ and $w_j$ such that
$$
\s_d\sum_{j=1}^m w_j\left(\sum_{i=1}^n a_{ij} x_i\right)^d = p(\x)+E_d(\x)
$$
For each $S\subseteq X$, let us take the derivative of this equation by every variable that occurs in $S$, where we take multiple derivatives of variables that occur multiple times.  This gives
$$
\frac{\s_d\cdot d!}{|S|!} \sum_{j=1}^m w_j\prod_{h\in S} a_{hj}\left(\sum_{i=1}^n a_{ij} x_i\right)^{d-|S|} = \frac{\partial}{\partial S}p(\x) + \frac{\partial}{\partial S}E_d(\x).
$$
Consider this equation as $S\subseteq X$ varies over all $C_s$ multisets of fixed size $s$.  The left-hand side represents a linear combination of the $m$ terms $\left(\sum_{i=1}^n a_{ij} x_i\right)^{d-s}$.
The polynomials $\frac{\partial}{\partial S}p(\x) + \frac{\partial}{\partial S}E_d(\x)$ on the right-hand side must be linearly independent as $S$ varies, since the distinct monomials $\frac{\partial}{\partial S}p(\x)$ are linearly independent and the coefficients of $\frac{\partial}{\partial S}E_d(\x)$ can be made arbitrarily small.

This means that the number $m$ of linearly combined terms on the left-hand side must be at least the number $C_s$ of choices for $S$.  Observe that $C_s$ is the coefficient of the term $y^s$ in the polynomial $g(y) = \prod_i (1 + y + \ldots + y^{r_i})$.  A simple (and not very good) lower bound for $C_s$ is $\frac{1}{d}\prod_{i=1}^n (r_i + 1)$, since there are $\prod_{i=1}^n (r_i + 1)$ distinct sub-multisets of $X$, and their cardinalities range from $0$ to $d$.
\end{document}